\newtheorem{theorem}{Theorem}
\newtheorem{lemma}[theorem]{Lemma}
\newtheorem{example}{Example}
\theoremstyle{definition}
\newtheorem{definition}{Definition}
\DeclareMathOperator*{\Ex}{\vphantom{p}\mathbb{E}}
\let\inf\undef
\DeclareMathOperator*{\inf}{\vphantom{p}inf}
\let\sup\undef
\DeclareMathOperator*{\sup}{\vphantom{p}sup}
\begin{document}

\setlength{\parskip}{2mm}
\setlength{\parindent}{0pt}

\title{Competing With Strategies}

\author{Wei Han \\ Univ. of Pennsylvania \and Alexander Rakhlin\\ Univ. of Pennsylvania  \and Karthik Sridharan\\ Univ. of Pennsylvania }	

\newcommand{\mbb}[1]{\mathbb{#1}}
\newcommand{\mbf}[1]{\mathbf{#1}}
\newcommand{\mc}[1]{\mathcal{#1}}
\newcommand{\mrm}[1]{\mathrm{#1}}
\newcommand{\trm}[1]{\textrm{#1}}

\newcommand{\norm}[1]{\left\|#1\right\|}
\newcommand{\sign}{\mrm{sign}}
\newcommand{\argmin}[1]{\underset{#1}{\mrm{argmin}} \ }
\newcommand{\argmax}[1]{\underset{#1}{\mrm{argmax}} \ }
\newcommand{\reals}{\mathbb{R}}
\newcommand{\E}[1]{\mathbb{E}\left[ #1 \right]} 
\newcommand{\Ebr}[1]{\mathbb{E}\left\{ #1 \right\}} 
\newcommand{\En}{\mathbb{E}}  
\newcommand{\Eu}[1]{\underset{#1}{\mathbb{E}}}  
\newcommand{\Ebar}{\Hat{\Hat{\mathbb{E}}}}  
\newcommand{\Esbar}[2]{\Hat{\Hat{\mathbb{E}}}_{#1}\left[ #2 \right]} 
\newcommand{\Es}[2]{\mathbb{E}_{#1}\left[ #2 \right]} 
\newcommand{\Ps}[2]{\mathbb{P}_{#1}\left[ #2 \right]}
\newcommand{\Prob}{\mathbb{P}}
\newcommand{\conv}{\operatorname{conv}}
\newcommand{\inner}[1]{\left\langle #1 \right\rangle}
\newcommand{\ip}[2]{\left<#1,#2\right>}
\newcommand{\lv}{\left\|}
\newcommand{\rv}{\right\|}
\newcommand{\Phifunc}[1]{\Phi\left(#1\right)}
\newcommand{\ind}[1]{{\bf 1}\left\{#1\right\}}
\newcommand{\tr}{\ensuremath{{\scriptscriptstyle\mathsf{T}}}}
\newcommand{\eqdist}{\stackrel{\text{d}}{=}}
\newcommand{\alphT}{\widehat{\alpha}(T)}
\newcommand{\PD}{\mathcal P}
\newcommand{\QD}{\mathcal Q}
\newcommand{\jp}{\ensuremath{\mathbf{p}}}
\newcommand{\rh}{\boldsymbol{\rho}}
\newcommand{\proj}{\text{Proj}}
\newcommand{\Eunderone}[1]{\underset{#1}{\En}}
\newcommand{\Eunder}[2]{\underset{\underset{#1}{#2}}{\En}}
\newcommand{\bphi}{\boldsymbol\phi}
\newcommand\s{\mathbf{s}}
\newcommand\w{\mathbf{w}}
\newcommand\x{\mathbf{x}}
\newcommand\y{\mathbf{y}}
\newcommand\z{\mathbf{z}}
\newcommand\f{\mathbf{f}}

\renewcommand\v{\mathbf{v}}

\newcommand\cB{\mathcal{B}}
\newcommand\cC{\mathcal{C}}
\newcommand\cD{\mathcal{D}}
\newcommand\cL{\mathcal{L}}
\newcommand\cN{\mathcal{N}}
\newcommand\X{\mathcal{X}}
\newcommand\Y{\mathcal{Y}}
\newcommand\Z{\mathcal{Z}}
\newcommand\F{\mathcal{F}}
\newcommand\G{\mathcal{G}}
\newcommand\cH{\mathcal{H}}
\newcommand\N{\mathcal{N}}
\newcommand\M{\mathcal{M}}
\newcommand\cO{\mathcal{O}}
\newcommand\W{\mathcal{W}}
\newcommand\Nhat{\mathcal{\widehat{N}}}
\newcommand\Diff{\mathcal{G}}
\newcommand\Compare{\boldsymbol{B}}
\newcommand\RH{\eta} 
\newcommand\metricent{\N_{\mathrm{metric}}} 

\newcommand{\karthik}[1]{{\color{red} Karthik: #1}}
\newcommand{\sasha}[1]{{\color{blue} Sasha: #1}}
\newcommand{\ohad}[1]{{\color{green} Ohad: #1}}

\newcommand\ldim{\mathrm{Ldim}}
\newcommand\fat{\mathrm{fat}}
\newcommand\Img{\mbox{Img}}
\newcommand\sparam{\sigma} 
\newcommand\Psimax{\ensuremath{\Psi_{\mathrm{max}}}}

\newcommand\Rad{\mathfrak{R}}
\newcommand\Val{\mathcal{V}}
\newcommand\Valdet{\mathcal{V}^{\mathrm{det}}}
\newcommand\Dudley{\mathfrak{D}}
\newcommand\Reg{\mbf{Reg}}
\newcommand\D{\mbf{D}}
\renewcommand\P{\mbf{P}}

\newcommand\Xcvx{\X_\mathrm{cvx}}
\newcommand\Xlin{\X_\mathrm{lin}}
\newcommand{\Relax}[3]{\mbf{Rel}_{#1}\left(#2 \middle| #3 \right)}
\newcommand{\Rel}[2]{\mbf{Rel}_{#1}\left(#2 \right)}

\newcommand{\multiminimax}[1]{\ensuremath{\left\llangle #1\right\rrangle}}

\def\deq{\triangleq}

\newcommand\loss{\bold{\ell}}
\newcommand\RadStat{\mathcal{R}^{iid}}
\newcommand\g{\mathbf{g}}
\newcommand\h{\mathbf{h}}
\newcommand\vv{\mathbf{v}}
\newcommand{\card}[1]{\text{card}\left(#1\right)}
\newcommand\cZ{\mathcal{Z}}
\newcommand\cX{\mathcal{X}}

\newcommand{\bX}{{\mathbf X}}
\newcommand{\bY}{{\mathbf Y}}


\maketitle

\begin{abstract}
	We study the problem of online learning with a notion of regret defined with respect to a set of strategies. We develop tools for analyzing the minimax rates and for deriving regret-minimization algorithms in this scenario. While the standard methods for minimizing the usual notion of regret fail, through our analysis we demonstrate existence of regret-minimization methods that compete with such sets of strategies as: autoregressive algorithms, strategies based on statistical models, regularized least squares, and follow the regularized leader strategies. In several cases we also derive efficient learning algorithms. 
\end{abstract}

\section{Introduction} 

The common criterion for evaluating an online learning algorithm is \emph{regret}, that is the difference between the cumulative loss of the algorithm and the cumulative loss of the best fixed decision, chosen in hindsight. While much work has been done on understanding no-regret algorithms, such a definition of regret against a fixed decision often draws criticism: even if regret is small, the cumulative loss of a best \emph{fixed} action can be large, thus rendering the result uninteresting. To address this problem, various generalizations of the regret notion have been proposed, including regret with respect to the cost of a ``slowly changing'' compound decision. While being a step in the right direction, such definitions are still ``static'' in the sense that the decision of each compound comparator per step does not depend on the sequence of realized outcomes.

Arguably, a more interesting (and more difficult to deal with) notion is that of performing as well as a set of \emph{strategies} (or, \emph{algorithms}). A strategy $\pi$ is a sequence of functions $\pi_t$, for each time period $t$, mapping the observed outcomes to the next action. Of course, if the collection of such strategies is finite, we may disregard their dependence on the actual sequence and treat each strategy as a black box expert. This is precisely the reason the Multiplicative Weights and other expert algorithms gained such popularity. However, this ``black box'' approach is not always desirable since some measure of the  ``effective number'' of experts must play a role in the complexity of the problem: experts that predict similarly should not count as two independent ones. But what is a notion of closeness of two strategies? Imagine that we would like to develop an algorithm that incurs loss comparable to that of the best of an infinite family of strategies. To obtain such a statement, one may try to discretize the space of strategies and invoke the black-box experts method. As we show in this paper, such an approach will not always work. Instead, we present a theoretical framework for the analysis of ``competing against strategies'' and for algorithmic development, based on the ideas in \citep{RakSriTew10nips, RakShaSri12nips}. 

The strategies considered in this paper are termed ``simulatable experts'' in \citep{PLG}. The authors also distinguish {\em static} and {\em non-static} experts. In particular, for static experts and absolute loss, \cite{cesa1999prediction} were able to show that problem complexity is governed by the geometry of the class of static experts as captured by its i.i.d. Rademacher averages. For nonstatic experts, however, the authors note that ``unfortunately we do not have a characterization of the minimax regret by an empirical process'', due to the fact that the sequential nature of the online problems is at odds with the i.i.d.-based notions of classical empirical process theory. In recent years, however, a martingale generalization of empirical process theory has emerged, and these tools were shown to characterize learnability of online supervised learning, online convex optimization, and other scenarios \citep{RakSriTew10nips, BenPalSha09}. Yet, the machinery developed so far is not directly applicable to the case of general simulatable experts which can be viewed as mappings from an ever-growing set of histories to the space of actions. The goal of this paper is precisely this: to extend the non-constructive as well as constructive techniques of \citep{RakSriTew10nips, RakShaSri12nips} to simulatable experts. We analyze a number of examples with the developed techniques, but we must admit that our work only scratches the surface. We can imagine further research developing methods that compete with interesting gradient descent methods (parametrized by step size choices), with Bayesian procedures (parametrized by choices of priors), and so on. We also note the connection to online algorithms, where one typically aims to prove a bound on the competitive ratio. Our results can be seen in that light as implying a competitive ratio of one.

We close the introduction with a high-level outlook, which builds on the ideas of \cite{MerFed98}. Imagine we are faced with a sequence of data from a probabilistic source, such as a $k$-Markov model with unknown transition probabilities. A well developed statistical theory  tells us how to estimate the parameter \emph{under the assumption that the model is correct}. We may view an estimator as a \emph{strategy} for predicting the next outcome. Suppose we have a set of possible models, with a good prediction strategy for each model. Now, let us lift the assumption that the sequence is generated by one of these models, and set the goal as that of performing as well as the best prediction strategy. In this case, if the observed sequence is indeed given by one of the models, our loss will be small because one of the strategies will perform well. If not, we still have a valid statement that does not rely on the fact that the model is ``well specified''. To illustrate the point, we will exhibit an example where we can compete with the set of all Bayesian strategies (parametrized by priors). We then obtain a statement that we perform as well as the best of them without assuming that the model is correct.

The paper is organized as follows. In Section~\ref{sec:minimax}, we extend the minimax analysis of online learning problems to the case of competing with a set of strategies. In Section~\ref{sec:autoregressive}, we show that it is possible to compete with a set of autoregressive strategies, and that the usual online linear optimization algorithms do not attain the optimal bounds. We then derive an optimal and computationally efficient algorithm for one of the proposed regimes. In Section~\ref{sec:statmodels} we describe the general idea of competing with statistical models that use sufficient statistics, and demonstrate an example of competing with a set of strategies parametrized by priors. For this example, we derive an optimal and efficient randomized algorithm. In Section~\ref{sec:rls}, we turn to the question of competing with regularized least squares algorithms indexed by the choice of a shift and a regularization parameter. In Section~\ref{sec:ftrl}, we consider online linear optimization and show that it is possible to compete with Follow the Regularized Leader methods parametrized by a shift and by a step size schedule.

\section{Minimax Regret and Sequential Rademacher Complexity}
\label{sec:minimax}

We consider the problem of online learning, or sequential prediction, that consists of $T$ rounds. At each time $t=\{1,\ldots,T\} \deq [T]$, the learner makes a prediction $f_t\in\F$ and observes an outcome $z_t\in\cZ$, where $\F$ and $\cZ$ are abstract sets of decisions and outcomes. Let us fix a loss function $\ell:\F\times\Z\mapsto \reals$ that measures the quality of prediction. A \emph{strategy} $\pi = (\pi_t)_{t=1}^T$ is a sequence of functions  $\pi_t:\Z^{t-1}\mapsto\F$ mapping history of outcomes to a decision. Let $\Pi$ denote a set of strategies. The regret with respect to $\Pi$ is the difference between the cumulative loss of the player and the cumulative loss of the best strategy
$$\Reg_T = \sum_{t=1}^T \ell(f_t,z_t)-\inf_{\pi \in \Pi} \sum_{t=1}^T \ell(\pi_t(z_{1:t-1}),z_t).$$
where we use the notation $z_{1:k}\deq \{z_1,\ldots,z_k\}$. We now define the value of the game against a set $\Pi$ of strategies as
$$\mathcal{V}_T(\Pi) \deq \inf_{q_1 \in \QD} \sup_{z_1 \in \cZ} \Ex_{f_1 \sim q_1} \dots \inf_{q_T \in \QD} \sup_{z_T \in \cZ} \Ex_{f_T \sim q_T} \left[  \Reg_T \right]$$
where  $\QD$ and $\PD$ are the sets of probability distributions on $\F$ and $\cZ$, correspondingly. It was shown in \citep{RakSriTew10nips} that one can derive non-constructive upper bounds on the value through a process of sequential symmetrization, and in \citep{RakShaSri12nips} it was shown that these non-constructive bounds can be used as relaxations to derive an algorithm. This is the path we take in this paper.

Let us describe an important variant of the above problem -- that of \emph{supervised learning}. Here, before making a real-valued prediction $\hat{y}_t$ on round $t$, the learner observes side information $x_t\in \X$. Simultaneously, the actual outcome $y_t\in\Y$ is chosen by Nature. A strategy can therefore depend on the history $x_{1:t-1},y_{t-1}$ \emph{and} the current $x_t$, and we write such strategies as $\pi_t(x_{1:t},y_{1:t-1})$, with $\pi_t:\X^t\times \Y^{t-1} \mapsto \Y$. Fix some loss function $\ell(\hat{y},y)$. The value $\mathcal{V}^{S}_T(\Pi)$ is then defined as
$$\sup_{x_1}\inf_{q_1 \in \Delta(\Y)} \sup_{y_1 \in \Y} \Ex_{\hat{y}_1 \sim q_1}  \ldots \sup_{x_T}\inf_{q_T \in \Delta(\Y)} \sup_{y_T \in \Y} \Ex_{\hat{y}_T \sim q_T}  \left[ \sum_{t=1}^T \ell(\hat{y}_t,y_t)-\inf_{\pi \in \Pi} \sum_{t=1}^T \ell(\pi_t(x_{1:t}, y_{1:t-1}),y_t)\right]$$

To proceed, we need to define a notion of a tree. A $\Z$-valued tree $\z$ is a sequence of mappings $\{\z_1,\ldots,\z_T\}$ with $\z_t:\{\pm1\}^{t-1}\mapsto \Z$. Throughout the paper, $\epsilon_t \in \{\pm 1\}$ are i.i.d. Rademacher variables, and a realization of $\epsilon = (\epsilon_1, \dots, \epsilon_T)$ defines a \emph{path} on the tree, given by $\z_{1:t}(\epsilon) \deq (\z_1(\epsilon), \dots, \z_t(\epsilon))$ for any $t\in[T]$. We write $\z_t(\epsilon)$ for $\z_t(\epsilon_{1:t-1})$. By convention, a sum $\sum_{a}^b = 0$ for $a>b$ and for simplicity assume that no loss is suffered on the first round.

\begin{definition}
	Sequential Rademacher complexity of the set $\Pi$ of strategies is defined as
\begin{align}
	\label{eq:seq_rad}
	\Rad(\loss, \Pi) ~\deq \sup_{\w,\z}\mathbb{E}_{\epsilon} \sup_{\pi \in \Pi}\left[ \sum_{t=1}^T \epsilon_t\loss(\pi_t(\w_1(\epsilon),\ldots,\w_{t-1}(\epsilon)), \z_t(\epsilon)) \right]
\end{align}
where the supremum is over two $\mathcal{Z}$-valued trees $\z$ and $\w$ of depth $T$.
\end{definition}
The $\w$ tree can be thought of as providing ``history'' while $\z$ providing ``outcomes''. We shall use these names throughout the paper. The reader might notice that in the above definition, the outcomes and history are decoupled. We now state the main result:
\begin{theorem}
	\label{thm:compete_with_strategies}
	The value of prediction problem with a set $\Pi$ of strategies is upper bounded as
	\begin{align*}
		\Val_T(\Pi)&\leq 	2\Rad(\loss, \Pi)
	\end{align*}
\end{theorem}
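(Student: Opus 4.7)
My plan is to follow the symmetrization template of \citep{RakSriTew10nips}, adapted so that the comparator $\pi$ is allowed to read the realized history. The four conceptual steps are: minimax interchange, introduction of a ghost (tangent) sequence, symmetrization through a binary tree, and decoupling of the history tree from the outcome tree.

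First, I would apply the sequential minimax theorem from \citep{RakSriTew10nips} round by round to rewrite each nested $\inf_{q_t}\sup_{z_t}\Ex_{f_t\sim q_t}$ as $\sup_{p_t\in\PD}\inf_{f_t\in\F}\Ex_{z_t\sim p_t}$. Since $f_t$ appears only in the summand $\ell(f_t,z_t)$, I would pull each $\inf_{f_t}$ past the terms independent of it and upper-bound it by its value at $f_t=\pi_t(z_{1:t-1})$ for an arbitrary $\pi\in\Pi$. Combining this with the identity $-\inf_\pi=\sup_\pi(-\cdot\,)$ and renaming the inner expectation by an independent ghost variable $z'_t\sim p_t(\cdot\mid z_{1:t-1})$, one arrives at
$$\Val_T(\Pi) \;\le\; \sup_{p}\,\Ex_{z_{1:T}}\,\sup_{\pi\in\Pi}\,\sum_{t=1}^T\Bigl(\Ex_{z'_t}\ell(\pi_t(z_{1:t-1}),z'_t)\;-\;\ell(\pi_t(z_{1:t-1}),z_t)\Bigr).$$

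Next I would symmetrize. Conditional on $z_{1:t-1}$ the pair $(z_t,z'_t)$ is exchangeable, so a Rademacher flip at step $t$ preserves its joint law. The subtlety specific to competing with strategies is that swapping $z_t\leftrightarrow z'_t$ alters the history read by $\pi_{t+1},\ldots,\pi_T$, so the ``swap in place'' argument familiar from the static-expert case does not apply directly. I would fix this by unrolling the conditional distributions into a depth-$T$ binary tree: at a node indexed by a sign-prefix $\epsilon_{1:t-1}\in\{\pm1\}^{t-1}$, draw an independent pair $(\z_t(\epsilon),\z'_t(\epsilon))$ from $p_t$ conditioned on the history read off that path, and let $\epsilon_t$ decide which member of the pair enters the realized history at depth $t$. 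Conditional exchangeability on the tree then yields
$$\Val_T(\Pi)\;\le\;\sup_{p}\,\Ex_{\z,\z'}\,\Ex_\epsilon\,\sup_{\pi\in\Pi}\,\sum_{t=1}^T\epsilon_t\Bigl(\ell(\pi_t(\mbf{h}_{t-1}(\epsilon)),\z'_t(\epsilon))-\ell(\pi_t(\mbf{h}_{t-1}(\epsilon)),\z_t(\epsilon))\Bigr),$$
where $\mbf{h}_{t-1}(\epsilon)$ is the history read along the $\epsilon$-path. A triangle-inequality split of the two summands produces the factor of $2$.

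Finally I would decouple. In the display above, the history $\mbf{h}_{t-1}(\epsilon)$ and the outcome tree $\z$ stem from the same distributional tree and are therefore coupled. Replacing $\mbf{h}_{t-1}(\epsilon)$ by the labels $\w_1(\epsilon),\ldots,\w_{t-1}(\epsilon)$ of an arbitrary $\Z$-valued tree $\w$, and passing from expectation over distribution-generated trees to supremum over all $\Z$-valued trees, only enlarges the bound and recovers exactly $2\Rad(\loss,\Pi)$ as defined in~\eqref{eq:seq_rad}. The main obstacle will be the symmetrization step: because the comparator reads realized outcomes, one cannot simply swap $z_t\leftrightarrow z'_t$ without making the histories fed to $\pi_{t+1},\ldots,\pi_T$ inconsistent; the binary-tree construction is precisely what lets the Rademacher signs act as a coherent permutation. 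The subsequent decoupling into an independent $\w$-tree is a monotone relaxation, and is what makes the resulting quantity tractable enough to be bounded in the concrete examples developed in the remainder of the paper.
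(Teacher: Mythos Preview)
Your proposal is correct and follows essentially the same approach as the paper: minimax swap, replacement of $\inf_{f_t}$ by the comparator's own prediction, a selector/binary-tree symmetrization to keep the histories fed to $\pi_{t+1},\ldots,\pi_T$ coherent under sign flips, and a final decoupling of the history tree from the outcome tree. You have correctly isolated the one nontrivial new idea---that the standard symmetrization fails because $\pi$ reads realized outcomes, and that the fix is to let the Rademacher signs index a full binary tree of histories---which is exactly the paper's $\chi$-selector construction.

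Two minor differences worth noting. First, the paper orders the decoupling the other way: it first replaces the \emph{outcome} arguments $\chi_t(\pm\epsilon_t)$ by free variables $z''_t,z'''_t$ (then splits for the factor $2$), and only afterward observes that the remaining history can be rewritten as a bona fide $\Z$-valued tree $\tilde\w$ with $\tilde\w_t(\epsilon)=\chi(\z_{t-1}(\epsilon),\z'_{t-1}(\epsilon),\epsilon_{t-1})$; your plan decouples the history first. Both orderings are monotone relaxations and land on the same $2\Rad(\loss,\Pi)$. Second, the paper actually proves the statement under the more general setting of \emph{constrained} adversaries (the history tree $\w$ is restricted to satisfy the constraints along every path), which requires checking that the tree $\tilde\w$ built from the selector inherits the constraint structure of the pair $(\z,\z')$; your plan suffices for the unconstrained theorem as stated.
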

While the statement is visually similar to those in  \cite{RakSriTew10nips,RakSriTew11nips}, it does not follow from these works. Indeed, the proof (which appears in Appendix) needs to deal with the  additional complications stemming from the dependence of strategies on the history. Further, we provide the proof for a more general case when sequences $z_1,\ldots,z_T$ are not arbitrary but need to satisfy constraints. 

As we show below, the sequential Rademacher complexity on the right-hand side allows us to analyze general non-static experts, thus addressing the question raised in \citep{cesa1999prediction}. As the first step, we can ``erase'' a Lipschitz loss function (see \cite{rakhlin:notes} for more details), leading to the sequential Rademacher complexity of $\Pi$ without the loss and without the $\z$ tree:
$$\Rad(\Pi) \deq \sup_{\w} \Rad(\Pi,\w) \deq \sup_{\w}\mathbb{E}_{\epsilon}\sup_{\pi \in \Pi} \left[ \sum_{t=1}^T \epsilon_t \pi_t(\w_{1:t-1}(\epsilon))  \right] $$
For example, suppose $\cZ=\{0,1\}$, the loss function is the indicator loss, and strategies have potentially dependence on the full history. Then one can verify that
\begin{align}
	\label{eq:contraction_indicator}
	&\sup_{\w,\z}\mathbb{E}_{\epsilon}\sup_{\pi \in \Pi^k} \left[ \sum_{t=1}^T \epsilon_t \ind{\pi_t(\w_{1:t-1}(\epsilon))\neq \z_t(\epsilon)} \right] \notag \\
	&=\sup_{\w,\z}\mathbb{E}_{\epsilon}\sup_{\pi \in \Pi^k} \left[ \sum_{t=1}^T \epsilon_t \big(\pi_t(\w_{1:t-1}(\epsilon))(1-2\z_t(\epsilon)\big) + \z_t(\epsilon)) \right] = \Rad(\Pi)
\end{align}
The same result holds when $\F=[0,1]$ and $\loss$ is the absolute loss. The process of ``erasing the loss'' (or, contraction) extends quite nicely to problems of supervised learning. Let us state the second main result: \begin{theorem}
	\label{thm:main_lip_contraction}
	Suppose the loss function $\ell:\Y\times\Y\mapsto\reals$ is convex and $L$-Lipschitz in the first argument, and let $\Y=[-1,1]$. Then
	$$\mathcal{V}^{S}_T(\Pi) \leq 2L\sup_{\x,\y}\Ex_{\epsilon}\sup_{\pi\in\Pi} \left[ \sum_{t=1}^T \epsilon_t \pi_t(\x_{1:t}(\epsilon), \y_{1:t-1}(\epsilon))\right]$$
	where $(\x_{1:t}(\epsilon), \y_{1:t-1}(\epsilon))$ naturally takes place of $\w_{1:t-1}(\epsilon)$ in Theorem~\ref{thm:compete_with_strategies}. Further, if $\Y=[-1,1]$ and $\loss(\hat{y},y)=|\hat{y}-y|$, 
	~~~$\mathcal{V}^{S}_T(\Pi) \geq \sup_{\x}\Ex_{\epsilon}\sup_{\pi\in\Pi} \left[ \sum_{t=1}^T \epsilon_t \pi_t(\x_{1:t}(\epsilon), \epsilon_{1:t-1})\right]$.
\end{theorem}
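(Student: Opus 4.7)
The plan is to handle the two inequalities separately. The upper bound follows by first extending Theorem~\ref{thm:compete_with_strategies} to the supervised setting, and then invoking a sequential analogue of the Ledoux--Talagrand contraction to strip the Lipschitz loss. The matching lower bound is obtained by forcing Nature to play i.i.d.\ Rademacher labels $y_t=\epsilon_t$, and exploiting the identity $|\hat y-\epsilon|=1-\hat y\,\epsilon$ for $\hat y\in[-1,1]$ and $\epsilon\in\{\pm1\}$ to make the player's expected loss collapse to the deterministic constant $T$.

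For the upper bound, I would repeat the minimax unrolling and sequential symmetrization that proves Theorem~\ref{thm:compete_with_strategies}, now tracking the supervised structure: at round $t$ the adversary first plays $x_t$, then the player predicts $\hat y_t$, then the adversary reveals $y_t$. Because $\sup_{x_t}$ sits outside the $\inf_{q_t}\sup_{y_t}\Ex$ block, it is captured by an extra tree $\x$, while symmetrization of the outcomes produces a $\y$ tree exactly the way it produces $\z$ in the unsupervised case. The conclusion is
\begin{align*}
\mathcal{V}^{S}_T(\Pi)\;\leq\;2\sup_{\x,\y}\Ex_{\epsilon}\sup_{\pi\in\Pi}\sum_{t=1}^T\epsilon_t\,\loss\bigl(\pi_t(\x_{1:t}(\epsilon),\y_{1:t-1}(\epsilon)),\,\y_t(\epsilon)\bigr).
\end{align*}
Since $\loss$ is convex and $L$-Lipschitz in its first argument, the sequential contraction lemma (a tree-induction version of Ledoux--Talagrand, see \cite{rakhlin:notes}) applied node-wise removes $\loss$ and the second argument $\y_t(\epsilon)$ at the cost of a factor $L$; the $\y$ tree survives only as history inside $\pi_t$, giving the claimed bound.

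For the lower bound, fix any $\X$-valued tree $\x$ of depth $T$ and force Nature to play $x_t=\x_t(\epsilon_{1:t-1})$ and $y_t=\epsilon_t$, with $\epsilon_1,\dots,\epsilon_T$ i.i.d.\ Rademacher. Restricting Nature to a randomized strategy only decreases the minimax value, and each inner $\inf_{q_t}\Ex_{\hat y_t\sim q_t}$ is linear in $q_t$ and thus attained at a point mass, so
\begin{align*}
\mathcal{V}^{S}_T(\Pi)\;\geq\;\Ex_{\epsilon}\!\left[\inf_{\hat y_1,\ldots,\hat y_T}\sum_{t=1}^T|\hat y_t-\epsilon_t|\;-\;\inf_{\pi\in\Pi}\sum_{t=1}^T\bigl|\pi_t(\x_{1:t}(\epsilon),\epsilon_{1:t-1})-\epsilon_t\bigr|\right],
\end{align*}
where each $\hat y_t$ is causal in $\epsilon_{1:t-1}$. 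Using $|\hat y-\epsilon|=1-\hat y\epsilon$, the player's term has expectation $T$ since $\hat y_t\perp\epsilon_t$ conditional on $\epsilon_{1:t-1}$, while the strategy term becomes $T-\sum_t\epsilon_t\pi_t(\x_{1:t}(\epsilon),\epsilon_{1:t-1})$. Subtracting and taking $\sup_{\x}$ yields the stated inequality.

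The main technical delicacy is the sequential contraction step in the upper bound: unlike the classical Ledoux--Talagrand argument, one must contract on trees while the class $\Pi$ is indexed in a nonlinear way through its dependence on history, which forces the contraction to be carried out by backward induction over tree depth together with a convexification of the inner $\sup_{\pi}$; the convexity hypothesis on $\loss$ enters precisely here. The lower bound is essentially a one-line computation once one notices that Rademacher labels make the player's absolute-loss contribution deterministic, and the restrictions $\Y=[-1,1]$ and $\loss=|\cdot-\cdot|$ in the lower bound are exactly what make the identity $|\hat y-\epsilon|=1-\hat y\epsilon$ valid.
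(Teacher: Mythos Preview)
Your lower-bound argument is correct and is exactly the standard computation the paper defers to (it simply cites \cite{RakSriTew10nips}).

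For the upper bound, your route and the paper's are genuinely different. The paper does \emph{not} first establish a supervised analogue of Theorem~\ref{thm:compete_with_strategies} and then invoke a sequential Ledoux--Talagrand contraction. Instead, it linearizes at the very first line using convexity, writing $\ell(\hat y_t,y_t)-\ell(\pi_t,y_t)\le \ell'(\hat y_t,y_t)(\hat y_t-\pi_t)$, then replaces the derivative by a worst-case slope $s_t\in[-L,L]$, performs the minimax swap and symmetrization directly on this linear expression, and finally observes that the supremum over $s_t$ is attained at $\pm L$ so that the product $s_t\epsilon_t$ collapses to $L\epsilon_t$. In the paper's proof, convexity is used exactly once (in that opening linearization) and there is no contraction step at all.

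Your approach---symmetrize first, contract second---is also valid, and in some sense more modular since it reuses Theorem~\ref{thm:compete_with_strategies} as a black box. Two comments. First, the intermediate inequality you wrote is slightly miswritten: the direct analogue of Theorem~\ref{thm:compete_with_strategies} decouples the history tree from the outcome tree, so the second argument of $\ell$ should be a separate $\Y$-valued tree $\z_t(\epsilon)$ rather than the same $\y_t(\epsilon)$ that feeds the history; this is harmless because contraction erases that tree anyway. Second, your diagnosis of where convexity enters is off: the sequential contraction lemma requires only the Lipschitz property, not convexity, and the backward-induction proof involves no ``convexification of the inner $\sup_\pi$.'' In fact your route delivers the upper bound under the Lipschitz hypothesis alone---convexity is an artifact of the paper's particular linearize-then-symmetrize argument, not of the result.
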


Let us present a few simple examples as a warm-up.

\begin{example}[History-independent strategies]
Let $\pi^f\in\Pi$ be constant history-independent strategies $\pi^f_1=\ldots=\pi^f_T= f\in\F$. Then \eqref{eq:seq_rad} recovers the definition of sequential Rademacher complexity in \cite{RakSriTew10nips}.
\end{example}

\begin{example}[Static experts]
For static experts, each strategy $\pi$ is a predetermined sequence of outcomes, and we may therefore associate each $\pi$ with a vector in $\cZ^T$. A direct consequence of Theorem~\ref{thm:main_lip_contraction} for any convex $L$-Lipschitz loss is that
\begin{align*}
	\Val(\Pi) \leq  2L\mathbb{E}_{\epsilon} \sup_{\pi \in \Pi}\left[ \sum_{t=1}^T \epsilon_t \pi_t \right]
\end{align*}
which is simply the classical i.i.d. Rademacher averages. For the case of $\F=[0,1]$, $\Z=\{0,1\}$, and the absolute loss, this is the result of \cite{cesa1999prediction}.
\end{example}

\begin{example}[Finite-order Markov strategies]
Let $\Pi^k$ be a set of strategies that only depend on the $k$ most recent  outcomes to determine the next move. Theorem~\ref{thm:compete_with_strategies}  implies that the value of the game is upper bounded as
\begin{align*}
	\textstyle \Val(\Pi^k) \leq 2\sup_{\w,\z}\mathbb{E}_{\epsilon}\sup_{\pi \in \Pi^k} \left[ \sum_{t=1}^T \epsilon_t\loss(\pi_t(\w_{t-k}(\epsilon),\ldots,\w_{t-1}(\epsilon)), \z_t(\epsilon)) \right]
\end{align*}
Now, suppose that $\cZ$ is a finite set, of cardinality $s$. Then there are effectively $s^{s^k}$ strategies $\pi$. The bound on the sequential Rademacher complexity then scales as $\sqrt{2s^k\log(s) T}$, recovering the result of \cite{FedMerGut92} (see \cite[Cor. 8.2]{PLG}). 
\end{example}

In addition to providing an understanding of minimax regret against a set of strategies, sequential Rademacher complexity can serve as a starting point for algorithmic development. As shown in \cite{RakShaSri12nips}, any {\em admissible relaxation} can be used to define a succinct algorithm with a regret guarantee. For the setting of this paper, this means the following. Let ${\bf Rel}:\Z^t\mapsto \reals$, for each $t$, be a collection of functions satisfying two conditions:
$$\forall t, ~\inf_{q_t\in \QD}\sup_{z_t\in \Z} \left\{ \Ex_{f_t\sim q_t}\loss(f_t,z_t) + {\bf Rel}(z_{1:t})\right\} \leq {\bf Rel}(z_{1:t-1}), ~\mbox{and }~  -\inf_{\pi\in\Pi} \sum_{t=1}^T \loss(\pi_t(z_{1:t-1}),z_t)\leq {\bf Rel}(z_{1:T}) \ .$$
Then we say that \emph{the relaxation is admissible}. It is then easy to show that regret of any algorithm that ensures above inequalities is bounded by ${\bf Rel}(\{\})$. 
\begin{theorem} \label{theorem:admissible} The conditional sequential Rademacher complexity with respect to $\Pi$
$$\Rad (\ell,\Pi | z_1,\ldots,z_{t}) \deq \sup_{\z,\w} \Ex_{\epsilon_{t+1:T}} \sup_{\pi \in\Pi} \left[ 2 \sum_{s=t+1}^T \epsilon_s\ell(\pi_s((z_{1:t},\w_{1:s-t-1}(\epsilon)),\z_{s-t}(\epsilon)) -  \sum_{s=1}^t \ell(\pi_s(z_{1:s-1}),z_s) \right]$$
is admissible.
\end{theorem}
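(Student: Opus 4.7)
My plan is to verify the two defining conditions of admissibility for the proposed relaxation.

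\textbf{Terminal condition.} At $t=T$, the summation $2\sum_{s=t+1}^{T}\epsilon_s\ell(\cdots)$ is over an empty range, so $\Rad(\ell,\Pi\mid z_{1:T})= -\inf_{\pi\in\Pi}\sum_{s=1}^{T}\ell(\pi_s(z_{1:s-1}),z_s)$, exactly matching the second admissibility requirement with equality.

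\textbf{Recursive condition.} I would follow the minimax-plus-symmetrization scheme of \cite{RakShaSri12nips}, adapted to strategies. Rewrite $\sup_{z_t\in\Z}$ as $\sup_{p_t\in\PD}\Ex_{z_t\sim p_t}$ and apply the minimax theorem to exchange $\inf_{q_t}$ and $\sup_{p_t}$. The inner $\inf$ over distributions on $\F$ collapses to $\inf_{f_t\in\F}\Ex_{z_t}\ell(f_t,z_t)$, which I then bound above by $\inf_{\pi\in\Pi}\Ex_{z'_t\sim p_t}\ell(\pi_t(z_{1:t-1}),z'_t)$ using a tangent copy $z'_t$ i.i.d.\ with $z_t$. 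The elementary inequality $\inf_\pi A(\pi)+\sup_\pi B(\pi)\le\sup_\pi[A(\pi)+B(\pi)]$ merges the two $\pi$-dependences into the single supremum already sitting inside $\Rad(\ell,\Pi\mid z_{1:t})$, giving
\begin{equation*}
\sup_{p_t}\Ex_{z_t,z'_t}\sup_{\w,\z}\Ex_{\epsilon_{t+1:T}}\sup_{\pi}\Bigl[\ell(\pi_t(z_{1:t-1}),z'_t) - \ell(\pi_t(z_{1:t-1}),z_t) - \sum_{s<t}\ell + 2\sum_{s>t}\epsilon_s\ell(\pi_s((z_{1:t-1},z_t,\w(\epsilon)),\z(\epsilon)))\Bigr].
\end{equation*}
Because $z_t,z'_t$ are i.i.d., I introduce a fresh Rademacher $\epsilon_t$ that swaps their roles in \emph{every} occurrence; this turns the difference at the $s=t$ slot into $\epsilon_t[\ell(\pi_t,z'_t)-\ell(\pi_t,z_t)]$ and lets both the inner trees and the $z_t$-slot in the history of $\pi_s$ (for $s>t$) depend on $\epsilon_t$. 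Bounding $\Ex_{z_t,z'_t}$ by $\sup_{z_t,z'_t}$ sets up the final step: construct a single tree $(\w^{\mathrm{new}},\z^{\mathrm{new}})$ of depth $T-t+1$ for $\Rad(\ell,\Pi\mid z_{1:t-1})$ by grafting the two per-branch inner trees under a root that branches on $\epsilon_t$, and absorb the symmetric difference into the factor-of-$2$ prefactor on $\epsilon_t\ell(\pi_t,\z^{\mathrm{new}}_1(\epsilon))$ via the standard split $\sup_\pi[\epsilon_tA-\epsilon_tB+C]\le\sup_\pi[\epsilon_tA+C/2]+\sup_\pi[-\epsilon_tB+C/2]$ together with the change of variable $\epsilon_t\mapsto-\epsilon_t$ in the second sup (which leaves the $z_t$-conditional parts invariant after taking $\Ex_{z_t,z'_t}$).

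\textbf{Main obstacle.} The ingredient genuinely new compared with \cite{RakShaSri12nips} is that strategies $\pi_s$ for $s>t$ consume the entire past history, so the $\epsilon_t$-swap of $z_t\leftrightarrow z'_t$ also swaps the history fed into every later $\pi_s$. The inner trees on the two branches of $\epsilon_t$ are therefore genuinely different, and the outer tree $(\w^{\mathrm{new}},\z^{\mathrm{new}})$ must branch on $\epsilon_t$ from the very first level. The decoupling of the history tree $\w$ from the outcome tree $\z$ in the definition of $\Rad(\ell,\Pi\mid\cdot)$ is precisely the degree of freedom that makes this gluing possible, and I expect the bookkeeping of the tree-index shift when stripping one level off the horizon, together with verifying that the factor of $2$ in front of $\sum_{s\ge t}\epsilon_s\ell(\cdots)$ exactly covers both $\ell(\pi_t,z_t)$ and $\ell(\pi_t,z'_t)$ simultaneously, to be the most delicate parts of the argument.
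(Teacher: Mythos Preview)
Your proposal is correct and follows essentially the same route as the paper: minimax swap, replace $\inf_{f_t}$ by the strategy's own move, introduce a tangent copy $z'_t$, symmetrize with a fresh $\epsilon_t$ (which, as you correctly stress, must also act on the history fed to $\pi_s$ for $s>t$), pass to a supremum, and graft the two $\epsilon_t$-branches into a single deeper pair of trees. The only mechanical difference is that the paper, before splitting, decouples the time-$t$ outcome from the selector by introducing an extra $\sup_{z'',z'''}$, so that the two halves become manifestly identical under $\epsilon_t\mapsto-\epsilon_t$ alone; you instead keep $z_t,z'_t$ in the time-$t$ loss and recover the same symmetry via the joint substitution $\epsilon_t\mapsto-\epsilon_t$, $z_t\leftrightarrow z'_t$ under $\sup_{z_t,z'_t}$. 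Both devices yield the factor of $2$ and the same tree construction; your parenthetical ``after taking $\Ex_{z_t,z'_t}$'' should really read ``under $\sup_{z_t,z'_t}$'' since you have already passed to the supremum, but this is only a wording slip.
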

Conditional sequential Rademacher complexity can therefore be used as a starting point for possibly deriving computationally attractive algorithms, as shown throughout the paper.

We may now define covering numbers for the set $\Pi$ of strategies over the history trees. The development is a straightforward modification of the notions we developed in \citep{RakSriTew10nips}, where we replace ``any tree $\x$'' with a tree of histories $\w_{1:t-1}$.

\begin{definition}
	A set $V$ of $\reals$-valued trees is an \emph{$\alpha$-cover} (with respect to $\ell_p$) of a set of strategies $\Pi$ on an $\mathcal{Z}^*$-valued history tree $\w$ if
	\begin{align}
		\label{eq:lpcover-trees-strategies}
		\textstyle\forall \pi\in\Pi, ~\forall \epsilon\in\{\pm1\}^T, ~~\exists \vv\in V ~~~\mbox{s.t.}~~~ \left(\frac{1}{T}\sum_{t=1}^T |\pi_t(\w_{1:t-1}(\epsilon))-\vv_t(\epsilon)|^p\right)^{1/p}\leq \alpha \ .
	\end{align}
	An \emph{$\alpha$-covering number} $\cN_p(\Pi,\w,\alpha)$ is the size of the smallest $\alpha$-cover.
\end{definition}
For supervised learning, $(\x_{1:t}(\epsilon), \y_{1:t-1}(\epsilon))$ takes place of $\w_{1:t-1}(\epsilon)$.
Now, for any history tree $\w$, sequential Rademacher averages of a class of $[-1,1]$-valued strategies $\Pi$ satisfy
\begin{align*}
	\Rad(\Pi,\w)\leq \inf_{\alpha\geq0}\left\{\alpha T + \sqrt{2\log \cN_1(\Pi,\w, \alpha) T} \right\}
\end{align*}
and the Dudley entropy integral type bound also holds:
\begin{align}
	\label{eq:dudley_wc_history}
	\Rad(\Pi,\w) \le \inf_{\alpha\geq 0}\left\{4 \alpha T + 12\sqrt{T} \int_{\alpha}^{1} \sqrt{\log \ \cN_2(\Pi, \w, \delta ) \ } d \delta \right\}
\end{align}
In particular, this bound should be compared with Theorem 7 in \citep{cesa1999prediction}, which employs a covering number in terms of a pointwise metric between strategies that requires closeness for \emph{all histories and all time steps}. Second, the results of \citep{cesa1999prediction} for real-valued prediction require strategies to be bounded away from $0$ and $1$ by $\delta>0$ and this restriction spoils the rates. 

In the rest of the paper, we show how the results of this section (a) yield proofs of existence of regret-minimization strategies with certain rates and (b) guide in the development of algorithms. For some of these examples, standard methods (such as Exponential Weights) come close to providing an optimal rate, while for others -- fail miserably.

\section{Competing with Autoregressive Strategies}
\label{sec:autoregressive}

In this section, we consider strategies that depend linearly on the past outcomes. To this end, we fix a set $\Theta\subset \reals^k$, for some $k>0$, and parametrize the set of strategies as
$$\textstyle\Pi_\Theta = \left\{\pi^\theta: \pi^\theta_t(z_1,\ldots,z_{t-1})= \sum_{i=0}^{k-1} \theta_{i+1} z_{t-k+i}, ~~~ \theta=(\theta_1,\ldots,\theta_k)\in\Theta\right\}$$
For consistency of notation, we assume that the sequence of outcomes is padded with zeros for $t\leq 0$. First, as an example where known methods \emph{can} recover the correct rate, we consider the case of a constant look-back of size $k$. We then extend the study to cases where neither the regret behavior nor the algorithm is known in the literature, to the best of our knowledge.

\subsection{Finite Look-Back}

Suppose $\cZ=\F\subset \reals^d$ are $\ell_2$ unit balls, the loss is $\loss(f,z) = \inner{f,z}$, and $\Theta\subset \reals^k$ is also a unit $\ell_2$ ball. Denoting by ${\bf W}_{(t-k:t-1)}=[\w_{t-k}(\epsilon),\ldots,\w_{t-1}(\epsilon)]$ a matrix with columns in $\Z$,
\begin{align}
	\label{eq:finite_lookback}
	\Rad(\ell, \Pi_\Theta) &= \sup_{\w,\z}\mathbb{E}_{\epsilon} \sup_{\theta\in\Theta}\left[ \sum_{t=1}^T \epsilon_t \inner{\pi^\theta(\w_{t-k:t-1}(\epsilon)), \z_t(\epsilon)} \right] =\sup_{\w,\z}\mathbb{E}_{\epsilon} \sup_{\theta\in\Theta}\left[ \sum_{t=1}^T \epsilon_t  \z_t(\epsilon)^\tr{\bf W}_{(t-k:t-1)}\cdot\theta \right] \notag\\
	&=\sup_{\w,\z}\mathbb{E}_{\epsilon} \left\|\sum_{t=1}^T \epsilon_t  \z_t(\epsilon)^\tr {\bf W}_{(t-k:t-1)} \right\| \leq \sqrt{kT}
\end{align}
In fact, this bound against all strategies parametrized by $\Theta$ is achieved by the gradient descent (GD) method with the simple update
$\theta_{t+1} = \mbox{Proj}_{\Theta} (\theta_t - \eta \left[ z_{t-k},\dots,z_{t-1} \right]^\tr z_t)$
where $\mbox{Proj}_{\Theta}$ is the Euclidean projection onto the set $\Theta$. This can be seen by writing the loss as
$$\langle \left[ z_{t-k},\dots,z_{t-1} \right]\cdot \theta_t, z_t \rangle = \langle \theta_t, \left[ z_{t-k},\dots,z_{t-1} \right]^\tr z_t \rangle.$$ The regret of GD,
$\sum_{t=1}^{T} \langle \theta_t, \left[ z_{t-k},\dots,z_{t-1} \right]^\tr z_t \rangle - \inf_{\theta \in \Theta} \sum_{t=1}^{T} \langle \theta, \left[ z_{t-k},\dots,z_{t-1} \right]^\tr z_t \rangle,$
is precisely regret against strategies in $\Theta$, and analysis of GD yields the rate in \eqref{eq:finite_lookback}.

\subsection{Full Dependence on History}

The situation becomes less obvious when $k=T$ and strategies depend on the full history. The regret bound in \eqref{eq:finite_lookback} is vacuous, and the question is whether a better bound can be proved, under some additional assumptions on $\Theta$. Can such a bound be achieved by GD?
	
For simplicity, consider the case of $\F=\Z=[-1,1]$, and assume that $\Theta=B_p(1)\subset\reals^T$ is a unit $\ell_p$ ball, for some $p\geq 1$. Since $k=T$, it is easier to re-index the coordinates so that
$$\textstyle\pi_t^{\theta}(z_{1:t-1}) = \sum_{i=1}^{t-1} \theta_i z_i.$$
The sequential Rademacher complexity of the strategy class is
\begin{align*}
&\Rad(\ell, \Pi_\Theta) = \sup_{\mathbf{w}, \mathbf{z}} \mathbb{E} \sup_{\theta \in \Theta} \left[ \sum_{t=1}^T \epsilon_t  \pi^{\theta}(\w_{1:t-1}(\epsilon))\cdot \z_t(\epsilon)  \right] =\sup_{\w,\z} \mathbb{E} \sup_{\theta \in \Theta} \left[ \sum_{t=1}^T \left( \sum_{i=1}^{t-1} \theta_i \w_i(\epsilon)\right) \epsilon_t \z_t(\epsilon)  \right] \ .
\end{align*}
Rearranging the terms, the last expression is equal to
\begin{align*}
&\sup_{\w,\z} \mathbb{E} \sup_{\theta \in \Theta} \left[ \sum_{t=1}^{T-1} \theta_t \w_t(\epsilon)\cdot \left( \sum_{i=t+1}^{T}  \epsilon_i \z_i(\epsilon) \right) \right] 
\leq \sup_{\w,\z} \mathbb{E}   \left[ \left\| \w_{1:T-1} (\epsilon)\right\|_q \cdot \max_{1\leq t\leq T}\left| \sum_{i=t+1}^{T}  \epsilon_i \z_i(\epsilon) \right| \right]
\end{align*}
where $q$ is the H\"older conjugate of $p$. Observe that
$$\sup_{\z} \mathbb{E} \sup_{1 \leq t \leq T} \left\vert \sum_{i=t}^{T} \epsilon_i \z_i(\epsilon) \right\vert
\leq \sup_{\z} \mathbb{E} \left[\left\vert \sum_{i=1}^{T} \epsilon_i \z_i(\epsilon) \right\vert + \sup_{1 \leq t \leq T} \left\vert \sum_{i=1}^{t-1} \epsilon_i \z_i(\epsilon) \right\vert \right]
\leq 2 \sup_{\z} \mathbb{E} \sup_{1 \leq t \leq T} \left\vert \sum_{i=1}^{t} \epsilon_i \z_i(\epsilon) \right\vert$$
Since $\{\epsilon_t \z_t(\epsilon): t = 1, \dots, T\}$ is a bounded martingale difference sequence, the last term is of the order of $\cO(\sqrt{T})$.
Now, suppose there is some $\beta>0$ such that $\left\| \w_{1:T-1} (\epsilon)\right\|_q\leq T^\beta$ for all $\epsilon$. This assumption can be implemented if we consider constrained adversaries, where such $\ell_q$-bound is required to hold for any prefix $\w_{1:t} (\epsilon)$ of history (In Appendix, we prove Theorem~\ref{thm:compete_with_strategies} for the case of constrained sequences). Then $\Rad(\ell, \Pi_\Theta) \leq C \cdot T^{\beta+1/2}$ for some constant $C$. We now compare the rate of convergence of sequential Rademacher and the rate of the mirror descent algorithm for different settings of $q$ in Table \ref{table:ratecomparison}. If $\Vert\theta\Vert_{p} \leq 1$ and $\Vert\w\Vert_{q} \leq T^{\beta}$ for $q \geq 2$, the convergence rate of mirror descent with Legendre function $F(\theta) = \frac{1}{2} \Vert \theta \Vert^2_p$ is $\sqrt{q-1} T^{\beta+1/2}$ (see \citep{SreSriTew11}).

\begin{table}[ht]
\centering
\begin{tabular}{c c  c  c  c c}
 & $\Theta$ & $\w_{1:T}$ & sequential Radem. rate & Mirror descent rate \\
\hline
 & $B_1(1)$ & $\Vert\w_{1:T-1}\Vert_{\infty} \leq 1$ & $\sqrt{T}$ & $\sqrt{T \log T}$ \\
\hline
$q \geq 2$ & $B_p(1)$  & $\Vert\w_{1:T-1}\Vert_{q} \leq T^{\beta}$ & $T^{\beta+1/2}$ & $\sqrt{q-1} T^{\beta+1/2}$ \\
\hline
 & $B_2(1)$ & $\Vert\w_{1:T-1}\Vert_{2} \leq T^{\beta}$ & $T^{\beta+1/2}$ & $T^{\beta+1/2}$ \\
\hline
$1 \leq q \leq 2$ &$B_p(1)$  & $\Vert\w_{1:T-1}\Vert_{q} \leq T^{\beta}$ & $T^{\beta+1/2}$ & $T^{\beta+1/q}$ \\
\hline
 & $B_{\infty}(1)$ & $\Vert\w_{1:T-1}\Vert_{1} \leq T^{\beta}$ & $T^{\beta+1/2}$ & T \\
\end{tabular}
\label{table:ratecomparison}
\caption{Comparison of the rates of convergence (up to constant factors)}
\end{table}

We observe that mirror descent, which is known to be optimal for online linear optimization, and which gives the correct rate for the case of \emph{bounded} look-back strategies, in several regimes fails to yield the correct rate for more general linearly parametrized strategies. Even in the most basic regime where $\Theta$ is a unit $\ell_1$ ball and the sequence of data is not constrained (other than $\Z=[-1,1]$), there is a gap of $\sqrt{\log T}$ between the Rademacher bound and the guarantee of mirror descent. Is there an algorithm that removes this factor?

\subsubsection{Algorithms for $\Theta=B_1(1)$}

For the example considered in the previous section, with $\F=\Z=[-1,1]$ and $\Theta=B_1(1)$, the conditional sequential Rademacher complexity of Theorem~\ref{theorem:admissible} becomes
\begin{align*}
 \Rad_T (\Pi | z_1,\ldots,z_{t}) &= \sup_{\z,\w} \Ex_{\epsilon_{t+1:T}} \sup_{\pi \in\Pi} \left[ 2 \sum_{s=t+1}^T \epsilon_s \pi_s(z_{1:t},\w_{1:s-t-1}(\epsilon)) \cdot \z_s(\epsilon)  -  \sum_{s=1}^t  \pi_s(z_{1:s-1})\cdot z_s  \right] \\
& \leq \sup_{\w} \Ex_{\epsilon_{t+1:T}} \sup_{\pi \in \Pi} \left[ 2 \sum_{s=t+1}^T \epsilon_s \pi_s(z_{1:t},\w_{1:s-t-1}(\epsilon)) - \sum_{s=1}^t z_s \pi_s(z_{1:s-1}) \right]
\end{align*}
where the $\z$ tree is ``erased'', as at the end of the proof of Theorem~\ref{thm:main_lip_contraction}. Define $a_s(\epsilon) = 2 \epsilon_s$ for $s>t$ and $-z_s$ otherwise; $b_i(\epsilon) = \w_i(\epsilon)$ for $i>t$ and $z_i$ otherwise. We can then simply write
\begin{align*}
\sup_{\w} &\Ex_{\epsilon_{t+1:T}}\sup_{\theta \in \Theta} \left[ \sum_{s=1}^T a_s(\epsilon) \sum_{i=1}^{s-1} \theta_i b_i(\epsilon) \right] = \sup_{\w} \Ex_{\epsilon_{t+1:T}}\sup_{\theta \in \Theta} \left[ \sum_{s=1}^{T-1} \theta_s b_s(\epsilon) \sum_{i=s+1}^{T} a_i(\epsilon) \right]
\leq \Ex_{\epsilon_{t+1:T}} \max_{1 \leq s \leq T} \left| \sum_{i=s}^T a_i(\epsilon) \right|
\end{align*}
which we may use as a relaxation: 
\begin{lemma}
	\label{lem:admissibility_of_lin_l1_linf}
	Define $a^t_s(\epsilon) = 2 \epsilon_s$ for $s>t$, and $-z_s$ otherwise. Then,
$$\textstyle{\bf Rel}(z_{1:t}) = \Ex_{\epsilon_{t+1:T}} \max_{1 \leq s \leq T} \left| \sum_{i=s}^T a^t_i(\epsilon) \right|$$
is an admissible relaxation.
\end{lemma}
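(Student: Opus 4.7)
I verify the two admissibility conditions for ${\bf Rel}$: the terminal inequality at $t=T$, and the one-step recursion $\inf_{q_t}\sup_{z_t}\bigl\{\Ex_{f_t\sim q_t}f_t z_t + {\bf Rel}(z_{1:t})\bigr\} \leq {\bf Rel}(z_{1:t-1})$. The first is a short H\"older calculation; the second is the substantive step and is proved by exhibiting an explicit \emph{deterministic} move $q_t=\delta_{h(z_{1:t-1})}$.

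\textbf{Terminal condition.} Setting $t=T$ makes $a^T_i(\epsilon)=-z_i$ for every $i$, so ${\bf Rel}(z_{1:T})=\max_{1\le s\le T}\bigl|\sum_{i=s}^T z_i\bigr|$. Expanding $-\inf_{\theta\in B_1(1)}\sum_{t=1}^T \pi^\theta_t(z_{1:t-1})\,z_t$ and reindexing the double sum turns the supremum over $\theta\in B_1(1)$ into $\sup_{\theta}\sum_{i=1}^{T-1}\theta_i\bigl(-z_i\sum_{t=i+1}^T z_t\bigr)$. H\"older's inequality with $\|\theta\|_1\le 1$ and $|z_i|\le 1$ then bounds this by $\max_{1\le i\le T-1}\bigl|\sum_{t=i+1}^T z_t\bigr|\le {\bf Rel}(z_{1:T})$.

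\textbf{One-step recursion.} Freeze $\epsilon_{t+1:T}$ and set $M_0 = 2\sum_{i>t}\epsilon_i$, $C_s=\sum_{i=s}^{t-1}z_i$ (with $C_t=0$), and $Y_s = 2\sum_{i=s}^T\epsilon_i$ for $s>t$. The integrand of ${\bf Rel}(z_{1:t})$ is then the convex function $G(z_t) = \max\bigl(\max_{s\le t}|M_0-C_s-z_t|,\ \max_{s>t}|Y_s|\bigr)$, while the integrand of ${\bf Rel}(z_{1:t-1})$, after conditioning also on $\epsilon_t=\tilde\epsilon$, equals $H(\tilde\epsilon) = \max\bigl(\max_{s\le t}|M_0-C_s+2\tilde\epsilon|,\ \max_{s>t}|Y_s|\bigr)$. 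Since $-z_t\in[-1,1]\subset[-2,2]$, the probability measure $\nu_{z_t}$ on $\{-1,+1\}$ defined by $\nu_{z_t}(\pm 1)=(2\mp z_t)/4$ satisfies $\Ex_{\tilde\epsilon\sim\nu_{z_t}}[2\tilde\epsilon]=-z_t$. Jensen's inequality applied to each absolute value inside $G$, followed by pulling $\max_s$ through the expectation, yields $G(z_t)\le \Ex_{\tilde\epsilon\sim\nu_{z_t}} H(\tilde\epsilon)$. Decomposing $\nu_{z_t}$ as the uniform Rademacher measure plus a bias proportional to $z_t$ gives $\Ex_{\tilde\epsilon\sim\nu_{z_t}} H(\tilde\epsilon) = \tfrac12(H(+1)+H(-1)) - \tfrac{z_t}{4}(H(+1)-H(-1))$. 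Integrating over $\epsilon_{t+1:T}$ and rearranging then yields ${\bf Rel}(z_{1:t})\le {\bf Rel}(z_{1:t-1}) - z_t\cdot h$ with $h:=\tfrac14\,\Ex_{\epsilon_{t+1:T}}[H(+1)-H(-1)]$ depending only on $z_{1:t-1}$. The deterministic choice $f_t=h$ then gives the admissibility inequality directly.

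\textbf{Main obstacle.} The remaining check, beyond the symmetrization, is that $f_t=h$ is actually a feasible move in $\F=[-1,1]$. For each $s\le t$ the map $\tilde\epsilon\mapsto|M_0-C_s+2\tilde\epsilon|$ is $2$-Lipschitz and the $|Y_s|$ terms are constant in $\tilde\epsilon$; hence $H$ is $2$-Lipschitz on $\{-1,+1\}$, so $|H(+1)-H(-1)|\le 4$ and $|h|\le 1$. The factor of $2$ in the definition $a^t_i(\epsilon)=2\epsilon_i$ is precisely what simultaneously guarantees that the probabilities $\nu_{z_t}(\pm 1)$ lie in $[0,1]$ and that the resulting coefficient $h$ is feasible as a move, so the two places where the factor of $2$ is used are tightly coupled and this is where any sloppiness would break the argument.
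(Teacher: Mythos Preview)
Your argument is correct. Both the terminal check and the recursion go through as written; in particular the biased-coin Jensen step, the feasibility bound $|h|\le 1$ via the $2$-Lipschitz property of $H$, and the final cancellation $hz_t - z_t h$ are all valid.

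The route, however, is genuinely different from the paper's. The paper proceeds by first applying the minimax theorem to swap $\inf_{q_t}$ and $\sup_{z_t}$, evaluates $\inf_{f_t} f_t\,\Ex_{p_t}[z_t]=-|\Ex_{p_t}[z_t]|$, absorbs this term into the inner maximum via the triangle inequality, then introduces an independent copy $z_t'$ and a Rademacher sign $\epsilon_t$ to symmetrize, and finally passes from $\Ex_{p_t}$ to $\sup_{z_t}$. Your proof bypasses the minimax theorem entirely: you exhibit an explicit deterministic move $f_t=h(z_{1:t-1})$ and certify the recursion directly, using the biased Rademacher measure $\nu_{z_t}$ to encode the $-z_t$ shift as a convex combination of $\pm 2$ and then invoking Jensen coordinatewise. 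What your approach buys is a more elementary argument (no appeal to minimax duality) and an explicit, constructive predictor; the quantity $h=\tfrac14\,\Ex_{\epsilon_{t+1:T}}[H(+1)-H(-1)]$ that falls out is essentially the ``averaged'' version of the randomized strategy the paper derives separately in Lemma~\ref{lem:treewalk}. The paper's route, on the other hand, is the standard admissibility template that works uniformly for general losses and strategy classes, whereas your biased-coin trick exploits the specific affine structure of the present problem.
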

With this relaxation, the following method attains $\cO(\sqrt{T})$ regret: prediction at step $t$ is
$$q_t = \argmin{q \in [-1,1]} \sup_{z_t \in \{\pm 1\}} \left\{ \Ex_{f_t \sim q} f_t \cdot z_t + \mathbb{E}_{\epsilon_{t+1:T}} \max_{1 \leq s \leq T} \left| \sum_{i=s}^T a^t_i(\epsilon) \right| \right\}$$
where the sup over $z_t\in[-1,1]$ is achieved at $\{\pm 1\}$ due to convexity. Following \citep{RakShaSri12nips}, we can also derive randomized algorithms, which can be viewed as ``randomized playout'' generalizations of the Follow the Perturbed Leader algorithm.

\begin{lemma} 
	\label{lem:treewalk}
Consider the randomized strategy where at round $t$ we first draw $\epsilon_{t+1},\ldots,\epsilon_T$ uniformly at random and then further draw our move $f_t$ according to the distribution
\begin{align*}
	q_t(\epsilon) &=\textstyle\argmin{q \in [-1,1]} \sup_{z_t \in \{-1,1\}}\left\{ \Ex_{f_t \sim q} f_t \cdot z_t + \max_{1 \leq s \leq T} \left| \sum_{i=s}^T a^t_i(\epsilon) \right| \right\} \\
	&=\textstyle\frac{1}{2}\left(~\max\left\{\max_{s=1,\ldots,t}\left| -\sum_{i=s}^{t-1} z_i +1+2\sum_{i=t+1}^T\epsilon_i\right|~,~~ \max_{s=t+1,\ldots,T}\left| 2\sum_{i=s}^T\epsilon_i \right|\right\} \right.\\ 
	&\textstyle\left.~~~~ -\max\left\{\max_{s=1,\ldots,t}\left|-\sum_{i=s}^{t-1} z_i-1+2\sum_{i=t+1}^T\epsilon_i\right|~,~~ \max_{s=t+1,\ldots,T}\left| 2\sum_{i=s}^T\epsilon_i \right|\right\}\right)
\end{align*}
The expected regret of this randomized strategy is upper bounded by sequential Rademacher complexity:
$\E{\Reg_T} \le 2 \Rad_T(\Pi)$, which was shown to be $\cO(\sqrt{T})$ (see Table~\ref{table:ratecomparison}).
\end{lemma}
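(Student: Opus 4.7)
The plan is to combine the admissibility of the relaxation (Lemma~\ref{lem:admissibility_of_lin_l1_linf}) with an explicit computation of the minimax optimal play, and then invoke the general randomized-playout framework of \citep{RakShaSri12nips}. Since the relaxation ${\bf Rel}(z_{1:t}) = \Ex_{\epsilon_{t+1:T}} \max_{1\le s\le T} |\sum_{i=s}^T a^t_i(\epsilon)|$ has the form ``$\Ex_{\epsilon_{t+1:T}}$ of an inner functional of $\epsilon$'', the randomized-playout reading says: sample $\epsilon_{t+1},\ldots,\epsilon_T$ first and, conditional on that sample, let $q_t(\epsilon)$ achieve the inner admissibility inequality pointwise in $\epsilon$. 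Expected regret is then bounded by ${\bf Rel}(\emptyset) = 2\Ex_\epsilon \max_{1\le s\le T}|\sum_{i=s}^T \epsilon_i|$, which was already noted to be $\cO(\sqrt{T})$ by the maximal inequality for the simple random walk (equivalently, by comparison with $\Rad_T(\Pi)$ via the conditional version of Theorem~\ref{thm:compete_with_strategies}).

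The heart of the proof is the closed-form computation in Step~2. Writing $\mu = \Ex_{f_t\sim q} f_t \in [-1,1]$, the objective is linear in $f_t$, so
\[
\Ex_{f_t\sim q} f_t\cdot z_t + M(z_t;\epsilon) = \mu z_t + M(z_t;\epsilon), \quad M(z_t;\epsilon) \deq \max_{1\le s\le T}\left|\sum_{i=s}^T a^t_i(\epsilon)\right|.
\]
Since the $\sup$ over $z_t\in[-1,1]$ of this quantity is achieved at $z_t\in\{\pm1\}$ by convexity, the problem becomes the one-dimensional convex problem
\[
\min_{\mu\in[-1,1]} \max\left\{\mu+M(+1;\epsilon),\ -\mu+M(-1;\epsilon)\right\}.
\]
The two affine pieces intersect at $\mu^\star = \tfrac12(M(-1;\epsilon)-M(+1;\epsilon))$, which is the unconstrained minimizer.

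The next step is to check $\mu^\star\in[-1,1]$, i.e., $|M(+1;\epsilon)-M(-1;\epsilon)|\le 2$. Unpacking $a^t$: for $s>t$ the summand $2\epsilon_i$ does not involve $z_t$, so these contributions to $M$ are unaffected by the choice of $z_t$; for $s\le t$, $\sum_{i=s}^T a^t_i(\epsilon) = -\sum_{i=s}^{t-1}z_i - z_t + 2\sum_{i=t+1}^T\epsilon_i$, so flipping $z_t$ from $+1$ to $-1$ is a uniform additive shift by $+2$ inside the absolute value. Since $||a+1|-|a-1||\le 2$ for every $a\in\mathbb{R}$ and the ``$s>t$'' entries cancel in $M(+1)-M(-1)$, the two maxima differ by at most $2$, so $\mu^\star\in[-1,1]$ and the stated closed form is exactly $\mu^\star$ after plugging in
\[
M(\pm 1;\epsilon) = \max\!\left\{\max_{s=1,\ldots,t}\left|{-\sum_{i=s}^{t-1}z_i \mp 1 + 2\sum_{i=t+1}^T\epsilon_i}\right|,\ \max_{s=t+1,\ldots,T}\left|2\sum_{i=s}^T\epsilon_i\right|\right\}.
\]

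Finally, one has to connect this pointwise-in-$\epsilon$ minimizer back to expected regret. The admissibility of ${\bf Rel}$ (Lemma~\ref{lem:admissibility_of_lin_l1_linf}) is, by Fubini, an averaged statement over $\epsilon$; by the standard randomized-playout argument of \citep{RakShaSri12nips}, solving the admissibility inequality \emph{pointwise} at each $\epsilon$ and then averaging still yields $\E[\Reg_T]\le {\bf Rel}(\emptyset)$, because at round $t$ the randomization over $\epsilon_{t+1:T}$ is independent of the adversary's realized $z_t$. Combining with ${\bf Rel}(\emptyset)\le 2\Rad_T(\Pi)$ (from the derivation immediately preceding Lemma~\ref{lem:admissibility_of_lin_l1_linf}, where $\z$ was erased and the supremum over the ``outcome'' tree was absorbed into the martingale maximal inequality) gives the claimed $\E[\Reg_T]\le 2\Rad_T(\Pi)=\cO(\sqrt{T})$ bound. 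The main obstacle is really the bookkeeping in Step~3—verifying $|\mu^\star|\le 1$ and identifying the two maxima in the lemma statement with $M(\pm 1;\epsilon)$—since the rest is a transparent application of the relaxation framework.
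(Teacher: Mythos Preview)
Your proposal is correct and in fact proves strictly more than the paper's own argument, but the emphasis is different. The paper's proof of this lemma does \emph{not} actually derive the explicit closed form for $q_t(\epsilon)$; it only verifies that the randomized strategy defined via the $\arg\min$ satisfies the admissibility-type inequality $\sup_{z_t}\{\Ex_{f_t\sim q_t} f_t z_t + {\bf Rel}(z_{1:t})\}\le {\bf Rel}(z_{1:t-1})$. It does so by moving $\Ex_{\epsilon}$ outside the $\sup_{z_t}$ (Jensen), then repeating verbatim the minimax--symmetrization steps from the proof of Lemma~\ref{lem:admissibility_of_lin_l1_linf}. You, by contrast, supply the missing closed-form computation (your Steps~2--3) and then invoke the randomized-playout framework of \cite{RakShaSri12nips} as a black box for the regret bound. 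Both routes are valid: yours is more informative about what the algorithm actually plays, while the paper's is more self-contained on why the randomized strategy inherits admissibility.

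One small correction: your justification ``${\bf Rel}(\emptyset)\le 2\Rad_T(\Pi)$ from the derivation immediately preceding Lemma~\ref{lem:admissibility_of_lin_l1_linf}'' has the inequality in the wrong direction---that derivation shows ${\bf Rel}$ \emph{upper bounds} the conditional Rademacher complexity, not the reverse. This does not affect the $\cO(\sqrt{T})$ conclusion, since ${\bf Rel}(\emptyset)=2\,\Ex_\epsilon\max_s|\sum_{i=s}^T\epsilon_i|$ is directly $\cO(\sqrt{T})$ by the maximal inequality you cite; but if you want the literal bound $\le 2\Rad_T(\Pi)$ stated in the lemma, you would need a separate lower bound on $\Rad_T(\Pi)$ (e.g.\ by taking $\w\equiv\z\equiv 1$ in its definition), not the upper bound you reference. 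Also, in Step~3 the phrase ``the $s>t$ entries cancel'' is slightly loose---they do not cancel but are common to both maxima; the clean argument is that $v\mapsto\max_s|v_s|$ is $1$-Lipschitz in $\ell_\infty$ and the vectors underlying $M(+1;\epsilon)$ and $M(-1;\epsilon)$ differ only in coordinates $s\le t$, each by exactly $2$.
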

The time consuming parts of the above randomized method are to draw $T-t$ random bits at round $t$ and to calculate the partial sums. However, we may replace Rademacher random variables by Gaussian $\N(0,1)$ random variables and use known results on the distributions of extrema of a Brownian motion. To this end, define a Gaussian analogue of conditional  sequential Rademacher complexity
$$ \G_T (\Pi | z_1,\ldots,z_{t}) = \sup_{\z,\w} \Ex_{\sigma_{t+1:T}} \sup_{\pi \in\Pi} \left[ \sqrt{2\pi} \sum_{s=t+1}^T \sigma_s\ell(\pi_s((z_{1:t},\w_{1:s-t-1}(\epsilon)),\z_{s-t}(\epsilon)) -  \sum_{s=1}^t \ell(\pi_s(z_{1:s-1}),z_s) \right]$$
where $\sigma_t \sim \N(0,1)$, and $\epsilon = (\sign(\sigma_1),\dots,\sign(\sigma_T))$. For our example the $\cO(\sqrt{T})$ bound can be shown for $\G_T (\Pi)$ by calculating the expectation of the maximum of Brownian motion. Proofs similar to  Theorem~\ref{thm:compete_with_strategies} and Theorem~\ref{theorem:admissible} show that the conditional Gaussian complexity $\G_T (\Pi | z_1,\ldots,z_{t})$ is an upper bound on $\Rad_T(\Pi | z_1,\ldots,z_{t})$ and is admissible (see Theorem~\ref{theorem:gaussian_admissible} in Appendix). Furthermore, the proof of Lemma~\ref{lem:treewalk} holds for Gaussian random variables, and gives the randomized algorithm as in Lemma~\ref{lem:treewalk} with $\epsilon_t$ replaced by $\sigma_t$. 
It is not difficult to see that we can keep track of the maximum and minimum of $\left\{-\sum_{i=s}^{t-1} z_i \right\}$ between rounds in $\cO(1)$ time. We can then draw three random variables from the joint distribution of the maximum, the minimum and the endpoint of a Brownian Motion and calculate the prediction in $\cO(1)$ time per round of the game (the joint distribution can be found in \cite{Karatzas:Brownian}). In conclusion, we have derived an algorithm that for the case of $\Theta=B_1(1)$, with time complexity of $\cO(1)$ per round and the optimal regret bound of $\cO(\sqrt{T})$. We leave it as an open question to develop efficient and optimal algorithms for the other settings in Table~\ref{table:ratecomparison}.

\section{Competing with Statistical Models}
\label{sec:statmodels}

In this section we consider competing with a set of strategies that arise from statistical models. For example, for the case of Bayesian models, strategies are parametrized by the choice of a prior. Regret bounds with respect to a set of such methods can be thought of as a robustness statement: we are aiming to perform as well as the strategy with the best choice of a prior. We start this section with a general setup that needs further investigation.

\subsection{Compression and Sufficient Statistics}
Assume that strategies in $\Pi$ have a particular form: they all work with a ``sufficient statistic'', or, more loosely, \emph{compression} of the past data. Suppose ``sufficient statistics'' can take values in some set $\Gamma$. Fix a set $\bar{\Pi}$ of mappings $\bar{\pi}:\Gamma\mapsto \F$. We assume that all the strategies in $\Pi$ are of the form
$\pi_t(z_1,\ldots,z_{t-1}) = \bar{\pi}(\gamma(z_1,\ldots,z_{t-1}))$ for some $\bar{\pi}\in\bar{\Pi}$
and $\gamma:\cZ^*\mapsto \Gamma$. Such a bottleneck $\Gamma$ can arise due to a finite memory or finite precision, but can also arise if the strategies in $\Pi$ are actually solutions to a statistical problem. If we assume a certain stochastic source for the data, we may estimate the parameters of the model, and there is often a natural set of sufficient statistics associated with it. If we collect all such solutions to stochastic models in a set $\Pi$, we may compete with all these strategies as long as $\Gamma$ is not too large and the dependence of estimators on these sufficient statistics is smooth. With the notation introduced in this paper, we need to study the sequential Rademacher complexity for strategies $\Pi$, which can be upper bounded by the complexity of $\bar{\Pi}$ on $\Gamma$-valued trees:
$$\Rad(\Pi)\leq \sup_{\g,\z}\mathbb{E}_{\epsilon} \sup_{\bar{\pi} \in \bar{\Pi}}\left[ \sum_{t=1}^T \epsilon_t \loss(\bar{\pi}(\g_t(\epsilon)), \z_t(\epsilon) \right]$$
This complexity corresponds to our intuition that with sufficient statistics the dependence on the ever-growing history can be replaced with the dependence on a \emph{summary} of the data. Next, we consider one particular case of this general idea, and refer to \cite{FosRakSriTew11} for more details on these types of bounds.

\subsection{Bernoulli Model with a Beta Prior}
\label{sec:bernoulli_beta}
Suppose the data $z_t \in \{0,1\}$ is generated according to Bernoulli distribution  with parameter $p$, and the prior on $p \in [0,1]$ is $p \sim Beta(\alpha,\beta)$. Given the data $\{z_1,\ldots,z_{t-1}\}$, the  maximum a posteriori (MAP) estimator of $p$ is
$\hat{p} = (\sum_{i=1}^{t-1} z_i + \alpha - 1)/(t-1 + \alpha + \beta -2)$. We now consider the problem of competing with $\Pi=\{\pi^{\alpha,\beta}: \alpha>1, \beta\in(1,C_\beta]\}$ for some $C_\beta$, where each $\pi^{\alpha,\beta}$ predicts the corresponding MAP value for the next round:
$$\textstyle\pi^{\alpha,\beta}_t (z_1,\ldots,z_{t-1}) = (\sum_{i=1}^{t-1} z_i + \alpha - 1)/(t-1 + \alpha + \beta -2) \ .$$
Let us consider the absolute loss, which is equivalent to probability of a mistake of the randomized prediction\footnote{Alternatively, we can consider strategies that predict according to $\ind{\hat{p}\geq 1/2}$, which better matches the choice of an absolute loss. However, in this situation, an experts algorithm on an appropriate discretization attains the bound.} with bias $\pi^{\alpha,\beta}_t$. Thus, the loss of a strategy $\pi^{\alpha,\beta}$ on round $t$ is $ \left| \pi^{\alpha,\beta}_t(z_{1:t-1}) - z_t\right|\ .$
Using Theorem~\ref{thm:compete_with_strategies} and the argument in \eqref{eq:contraction_indicator} to erase the outcome tree, we conclude that there exists a regret minimization algorithm against the set $\Pi$ which attains regret of at most
$$\textstyle 2 \sup_{\w} \mathbb{E}_{\epsilon} \sup_{\alpha,\beta} \left[ \sum_{t=1}^T \epsilon_t \frac{ \sum_{i=1}^{t-1} \w_i(\epsilon) + \alpha -1 }{t-1+\alpha+\beta-2} \right] \ .$$
To analyze the rate exhibited by this upper bound, construct a new tree with $\g_1(\epsilon) = 1$ and 
$\g_t(\epsilon) = \frac{\sum_{i=1}^{t-1} \w_i(\epsilon)+\alpha-1}{t+\alpha-2} \in [0,1]$ for $t \geq 2$. With this notation, we can simply re-write the last expression as twice
\begin{align*}
 \textstyle\sup_{\g} \mathbb{E}_{\epsilon} \sup_{\alpha, \beta} \left[ \sum_{t=1}^T \epsilon_t  \g_t(\epsilon) \frac{t+\alpha-2}{t+\alpha+\beta-3 }\right]
\end{align*}
The supremum ranges over all $[0,1]$-valued trees $\g$, but we can pass to the supremum over all $[-1,1]$-valued trees (thus making the value larger). We then observe that the supremum is achieved at a $\{\pm1\}$-valued tree $\g$, 
which can then be erased as in the end of the proof of Theorem~\ref{thm:main_lip_contraction} (roughly speaking, it amounts to renaming $\epsilon_t$ into $\epsilon_t\g_t(\epsilon_{1:t-1})$). We obtain an upper bound
\begin{align}
	\label{eq:rate_alpha_beta}
	\Rad(\Pi) \leq \mathbb{E}_{\epsilon} \sup_{\alpha,\beta} \sum_{t=1}^T \frac{\epsilon_t  (t+\alpha-2) }{t+\alpha+\beta-3} 	
	\leq  \mathbb{E}_{\epsilon} \left\vert \sum_{t=1}^T \epsilon_t \right\vert+ \mathbb{E}_{\epsilon} \sup_{\alpha,\beta} \left\vert \sum_{t=1}^T \frac{\epsilon_t  (\beta-1) }{t+\alpha+\beta-3} \right\vert = (\sqrt{C_\beta} + 1)\sqrt{T}
\end{align}
where we used Cauchy-Schwartz inequality for the second term. We note that an experts algorithm would require a discretization that depends on $T$ and will yield a regret bound of order $O(\sqrt{T\log T})$. It is therefore interesting to find an algorithm that avoids the discretization and obtains this regret. To this end, we take the derived upper bound on the sequential Rademacher complexity and prove that it is an admissible relaxation.

\begin{lemma}
	\label{lem:alpha_beta_relaxation_admissible}
	The relaxation
	$${\bf Rel}(z_{1:t}) = \mathbb{E}_{\epsilon_{t+1:T}} \sup_{\alpha, \beta} \left[ 2 \sum_{s=t+1}^T\epsilon_s \cdot \frac{s+\alpha-2}{s+\alpha+\beta-3} - \sum_{s=1}^t \left| \frac{\sum_{i=1}^{s-1} z_i}{s+\alpha+\beta -3} -z_s \right| \right]$$
	is admissible.
\end{lemma}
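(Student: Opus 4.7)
My plan is to verify directly the two defining conditions of an admissible relaxation for ${\bf Rel}$: the terminal condition at $t=T$ and the one-step recursive condition. The terminal condition is immediate --- at $t=T$ the Rademacher summation is empty, so ${\bf Rel}(z_{1:T}) = \sup_{\alpha,\beta}\big[-\sum_{s=1}^{T}|\pi^{\alpha,\beta}_s(z_{1:s-1})-z_s|\big] = -\inf_{\pi\in\Pi}\sum_{s=1}^{T}|\pi_s(z_{1:s-1})-z_s|$, reading the expression inside the absolute value as the MAP prediction $\pi^{\alpha,\beta}_s$.

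For the recursive condition $\inf_{q_t}\sup_{z_t\in\{0,1\}}\{\mathbb{E}_{f_t\sim q_t}|f_t-z_t| + {\bf Rel}(z_{1:t})\} \le {\bf Rel}(z_{1:t-1})$ I plan to follow the standard three-step template for admissibility of sequential Rademacher relaxations. First, invoke Sion's minimax theorem (bilinear objective on the compact convex domains $\Delta([0,1])$ and $\Delta(\{0,1\})$) to rewrite the left-hand side as $\sup_{p_t}\inf_{f_t}\{\mathbb{E}_{z_t\sim p_t}|f_t-z_t| + \mathbb{E}_{z_t\sim p_t}{\bf Rel}(z_{1:t-1},z_t)\}$. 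Second, introduce an iid ghost copy $z'_t\sim p_t$ and bound $\inf_{f_t}\mathbb{E}_{z_t}|f_t-z_t| \le \mathbb{E}_{z_t,z'_t}|z_t-z'_t|$ by taking the randomized strategy $f_t=z'_t$. Third, fold the ghost-sample remainder into the sup using the elementary identity (checked by splitting on $z\in\{0,1\}$)
\[
|z'-z| - |\pi-z| \;=\; (2z-1)(\pi-z'), \qquad z,z'\in\{0,1\},\ \pi\in[0,1],
\]
which reshapes the quantity into $\sup_{p_t}\mathbb{E}_{z_t,z'_t}\mathbb{E}_{\epsilon_{>t}}\sup_{\alpha,\beta}\big[A^{\alpha,\beta}(\epsilon_{>t})-L^{\alpha,\beta}_{t-1}+(2z_t-1)(\pi^{\alpha,\beta}_t-z'_t)\big]$, where $A^{\alpha,\beta}(\epsilon_{>t}) = 2\sum_{s>t}\epsilon_s(s+\alpha-2)/(s+\alpha+\beta-3)$ and $L^{\alpha,\beta}_{t-1}$ is the cumulative loss term.

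The final move is to symmetrize the binary sign $(2z_t-1)$ using an independent Rademacher $\epsilon_t$: a short distributional check shows that when $z_t\sim\mathrm{Ber}(p_t)$ and $\epsilon_t$ is a uniform sign independent of $z_t$, the product $\epsilon_t(2z_t-1)$ is itself a Rademacher variable independent of $z_t$, so the data-dependent sign can be traded for a fresh $\epsilon_t$. After this the inner expression takes the Rademacher form $\mathbb{E}_{\epsilon_t,\epsilon_{>t}}\sup_{\alpha,\beta}[\epsilon_t\pi^{\alpha,\beta}_t + A - L]$, which I aim to dominate by $\mathbb{E}_{\epsilon_t,\epsilon_{>t}}\sup_{\alpha,\beta}[2\epsilon_t f^{\alpha,\beta}_t + A - L] = {\bf Rel}(z_{1:t-1})$ by replaying the envelope/contraction maneuver of \eqref{eq:rate_alpha_beta} one round at a time: use the factorization $\pi^{\alpha,\beta}_t = \g_t^\alpha\, f^{\alpha,\beta}_t$ with $\g_t^\alpha=(\sum_{i<t}z_i+\alpha-1)/(t+\alpha-2)\in[0,1]$, pass to a $\{\pm 1\}$-valued version of $\g$, and rename $\epsilon_t\mapsto\epsilon_t\g_t^\alpha(\epsilon)$. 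The hardest step is precisely this envelope passage: the pointwise dominance $\pi\le f$ alone is insufficient because on the $\epsilon_t=-1$ branch of the Rademacher expectation it flips direction, and the sup over $(\alpha,\beta)$ couples the two sides. The argument has to redo the relaxation of \eqref{eq:rate_alpha_beta} inside the expectation over $\epsilon_t$ with $z_{1:t-1}$ held fixed and the constraints $\alpha>1$, $\beta\in(1,C_\beta]$ enforced throughout; checking that this round-by-round version remains a valid upper bound after commuting with the sup over $(\alpha,\beta)$ is the main bookkeeping in the proof.
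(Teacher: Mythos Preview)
Your symmetrization step fails. After the minimax swap and the ghost-sample choice $f_t=z'_t$, you arrive at
\[
\sup_{p_t}\mathbb{E}_{z_t,z'_t}\mathbb{E}_{\epsilon_{>t}}\sup_{\alpha,\beta}\Big[A^{\alpha,\beta}(\epsilon_{>t}) - L^{\alpha,\beta}_{t-1} + (2z_t-1)\big(\pi_t^{\alpha,\beta}-z'_t\big)\Big],
\]
and you claim the sign $(2z_t-1)$ can be traded for a fresh Rademacher $\epsilon_t$. The distributional fact you cite---that $\epsilon_t(2z_t-1)$ is Rademacher independent of $z_t$ when $\epsilon_t$ is a fresh uniform sign---is correct but irrelevant: there is no $\epsilon_t$ present in the display to multiply $(2z_t-1)$ by. The only route to introducing one is an antisymmetry of the bracket under a measure-preserving swap of $(z_t,z'_t)$, and your bracket is \emph{not} antisymmetric under $z_t\leftrightarrow z'_t$: e.g.\ at $(z_t,z'_t)=(0,1)$ it equals $1-\pi_t$, while after the swap it equals $\pi_t$. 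Thus no Rademacher sign can be legitimately inserted, and the passage to $\mathbb{E}_{\epsilon_t}\sup[\epsilon_t\pi_t+A-L]$ is unjustified. This is a genuine gap, not bookkeeping.

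The paper avoids it by a different ghost-sample choice: rather than $f_t=z'_t$, it upper-bounds $\inf_{f_t}\mathbb{E}_{z_t}|f_t-z_t|$ by $\mathbb{E}_{z'_t}|\pi_t^{\alpha,\beta}-z'_t|$ \emph{inside} the supremum over $(\alpha,\beta)$. That yields the bracket $|\pi_t-z'_t|-|\pi_t-z_t|$, which \emph{is} antisymmetric under $z_t\leftrightarrow z'_t$; standard symmetrization then produces $\epsilon_t(|\pi_t-z'_t|-|\pi_t-z_t|)$, and splitting the two absolute values is precisely what generates the coefficient $2$ in front of $\epsilon_t f_t^{\alpha,\beta}$ that the relaxation requires. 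Your envelope step---decoupling the $[0,1]$-valued factor from $(\alpha,\beta)$ via a $\sup_{a\in[0,1]}$, passing to $a\in\{\pm1\}$ by convexity, and absorbing $a$ into $\epsilon_t$---is essentially the paper's final move and is fine; but you never reach it legitimately. One smaller point: the loss term in ${\bf Rel}$ is $\big|\sum_{i<s}z_i/(s+\alpha+\beta-3)-z_s\big|$, which is not literally the MAP prediction $\pi_s^{\alpha,\beta}$ (the numerator lacks the $\alpha-1$), so the terminal condition is not quite as ``immediate'' as you assert.
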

Given that this relaxation is admissible, we have a guarantee that the following algorithm attains the rate $(\sqrt{C_\beta} + 1)\sqrt{T}$ given in \eqref{eq:rate_alpha_beta}:
\begin{align*}
q_t = & \arg\min_{q\in[0,1]} \max_{z_t\in\{0,1\}} \left\{\mathbb{E}_{f \sim q} \vert f-z_t \vert +  \mathbb{E}_{\epsilon_{t+1:T}} \sup_{\alpha, \beta} \left[ 2 \sum_{s=t+1}^T\epsilon_s \cdot \frac{s+\alpha-2}{s+\alpha+\beta-3} - \sum_{s=1}^t \left| \frac{\sum_{i=1}^{s-1} z_i}{s+\alpha+\beta -3} -z_s \right| \right] \right\}
\end{align*}
In fact, $q_t$ can be written as 
\begin{align*}
q_t&= 
\frac{1}{2} \left\{  \mathbb{E}_{\epsilon_{t+1:T}} \sup_{\alpha,\beta} \left[ 2 \sum_{s=t+1}^T\epsilon_s \cdot \frac{s+\alpha-2}{s+\alpha+\beta-3} - \sum_{s=1}^{t-1} (1- 2 z_s) \cdot \frac{\sum_{i=1}^{s-1} z_i}{s+\alpha+\beta-3}  + \frac{\sum_{i=1}^{t-1} z_i}{t+\alpha+\beta-3} \right]
\right.\\
&\left.~~~~- \mathbb{E}_{\epsilon_{t+1:T}} \sup_{\alpha,\beta} \left[ 2 \sum_{s=t+1}^T\epsilon_s \cdot \frac{s+\alpha-2}{s+\alpha+\beta-3} - \sum_{s=1}^{t-1} (1 - 2 z_s) \cdot \frac{\sum_{i=1}^{s-1} z_i}{s+\alpha+\beta-3} - \frac{\sum_{i=1}^{t-1} z_i}{t+\alpha+\beta-3} \right] \right\}
\end{align*}
For a given realization of random signs, the supremum is an optimization of a sum of linear fractional functions of two variables. Such an optimization can be carried out in time $O(T\log T)$ (see \cite{chen2005efficient}). To deal with the expectation over random signs, one may either average over many realizations or use the random playout idea and only draw one sequence. Such an algorithm is admissible for the above relaxation, obtains the $O(\sqrt{T})$ bound, and runs in $O(T\log T)$ time per step. We leave it as an open problem whether a more efficient algorithm with $O(\sqrt{T})$ regret exists.

\section{Competing with Regularized Least Squares}
\label{sec:rls}

Consider the supervised learning problem with $\Y=[-1,1]$ and some set $\X$. Consider the Regularized Least Squares (RLS) strategies, parametrized by a regularization parameter $\lambda$ and a shift $w_0$. That is, given data $(x_1,y_1),\ldots,(x_{t},y_{t})$, the strategy solves
$$\textstyle\arg\min_{w} \sum_{i=1}^{t} (y_i-\inner{x_i,w})^2 + \lambda \|w-w_0\|^2 \ .$$
For a given pair $\lambda$ and $w_0$, the solution is
$$w_{t+1}^{\lambda,w_0} = w_0 + (X^\tr X+ \lambda I)^{-1} X^\tr Y,$$
where $X \in \mathbb{R}^{t \times d}$ and $Y \in \mathbb{R}^{t \times 1}$ are the usual matrix representations of the data $x_{1:t},y_{1:t}$. We would like to compete against a set of such RLS strategies which make prediction $\inner{w^{\lambda,w_0}_{t-1},x_t}$, given side information $x_t$. Since the outcomes are in $[-1,1]$, without loss of generality we clip the predictions of strategies to this interval, thus making our regret minimization goal only harder. To this end, let $c(a) = a$ if $a\in[-1,1]$ and $c(a)=\sign(a)$ for $|a|>1$. Thus, given side-information $x_t\in\X$, the prediction of strategies in 
$\Pi = \left\{\pi^{\lambda,w_0}: \lambda\geq\lambda_{\min}>0, \|w_0\|_2\leq 1 \right\}$ is simply the clipped product
$$\pi_t^{\lambda,w_0} (x_{1:t},y_{1:t-1}) = c\left( \inner{w_{t-1}^{\lambda,w_0}, x_t} \right) \ .$$ 
Let us take the squared loss function $\loss(\hat{y},y) = \left(\hat{y} - y\right)^2$. 
\begin{lemma}
	\label{lem:rls}
		For the set $\Pi$ of strategies defined above, the minimax regret of competing against Regularized Least Squares strategies is
	\begin{align*}
		\Val_T(\Pi) & \leq c\sqrt{T\log (T\lambda_{\min}^{-1})}
	\end{align*}
	for an absolute constant $c$.
\end{lemma}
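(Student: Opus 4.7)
The plan is to invoke Theorem~\ref{thm:main_lip_contraction} to reduce to the sequential Rademacher complexity of $\Pi$, decouple the parameters $w_0$ and $\lambda$, and handle each separately. Since $\ell(\hat y,y)=(\hat y-y)^2$ is $4$-Lipschitz on $[-1,1]\times[-1,1]$ and the strategies clip their predictions into $[-1,1]$, Theorem~\ref{thm:main_lip_contraction} yields
$$
\Val_T^S(\Pi)\ \le\ 8\sup_{\x,\y}\Ex_\epsilon\sup_{\lambda,w_0}\sum_{t=1}^T\epsilon_t\,c\!\bigl(\langle w^{\lambda,w_0}_{t-1},\x_t(\epsilon)\rangle\bigr).
$$
Because $c$ is $1$-Lipschitz, a sequential contraction step (analogous to the loss-erasure preceding Theorem~\ref{thm:main_lip_contraction}) removes the clip. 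Expanding $w^{\lambda,w_0}_{t-1}=w_0+M(\lambda)X^\tr Y$ with $M(\lambda)=(X^\tr X+\lambda I)^{-1}$ and $X,Y$ built from the first $t-1$ tree values, the sup decouples additively:
$$
\sup_{\lambda,w_0}\sum_t\epsilon_t\langle w^{\lambda,w_0}_{t-1},x_t\rangle\ \le\ \sup_{\|w_0\|\le 1}\sum_t\epsilon_t\langle w_0,x_t\rangle\ +\ \sup_{\lambda\ge\lambda_{\min}}\sum_t\epsilon_t p_t(\lambda),
$$
with $p_t(\lambda):=x_t^\tr M(\lambda)X^\tr Y$.

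The first term is dimension-free: since $\x_t(\epsilon)$ depends only on $\epsilon_{1:t-1}$, the Rademacher cross-terms vanish and Jensen together with $\|x_t\|_2\le 1$ yield $\Ex_\epsilon\sup_{\|w_0\|\le 1}\sum_t\epsilon_t\langle w_0,x_t\rangle\le\Ex_\epsilon\|\sum_t\epsilon_t\x_t\|_2\le\sqrt T$.

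For the $\lambda$ term I would use the ridge variational bound: substituting $w=0$ into $\|Xw-Y\|^2+\lambda\|w\|^2$ yields $\lambda\|M(\lambda)X^\tr Y\|_2^2\le\|Y\|_2^2\le t$, hence $|p_t(\lambda)|\le\sqrt{t/\lambda}$. For $\lambda\ge T^2$ this is deterministically at most $1/\sqrt T$, contributing a uniform $\sqrt T$. On $[\lambda_{\min},T^2]$ I would apply the sequential Dudley bound~\eqref{eq:dudley_wc_history}: the class $\{c(p_\cdot(\lambda))\}_\lambda$ is a one-parameter family of $[-1,1]$-valued rational functions of $\lambda$, and the goal is to show its sequential $\ell_2$ covering satisfies $\log\cN_2\le C\log(T/\lambda_{\min})$ at every relevant scale. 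This would be done by laying down a dyadic grid $\lambda_j=\lambda_{\min}2^j$ with $N=O(\log(T/\lambda_{\min}))$ points and using the resolvent identity together with the spectral decomposition of $X^\tr X$ (in combination with the clipping) to bound the intra-interval approximation error. Plugging into \eqref{eq:dudley_wc_history} yields $O(\sqrt{T\log(T/\lambda_{\min})})$, and summing with the $\sqrt T$ from $w_0$ finishes the proof.

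The main obstacle is the sequential entropy bound for the $\lambda$-class. Naively, $|\partial p_t/\partial\lambda|$ blows up like $\sqrt t/\lambda^{3/2}$ for small $\lambda$, so a pointwise Lipschitz covering would demand a grid polynomial in $T$. The resolution is that the \emph{clipping} truncates precisely this rapid variation, and the unclipped predictions decompose spectrally into components $1/(\sigma_i^2+\lambda)$ each having a single logarithmic scale of variation around $\lambda\sim\sigma_i^2$; a dyadic grid in $\lambda$ therefore simultaneously resolves all components, and after telescoping against Rademacher signs with the clipping bound $|p_t|\le 1$, the aggregate approximation error stays $O(\sqrt T)$. Making this clipping-plus-resolvent argument precise, with constants consistent with the $\sqrt{T\log(T/\lambda_{\min})}$ rate, is the delicate step.
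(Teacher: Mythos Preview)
Your handling of the clipping has a genuine gap. The contraction step that removes $c(\cdot)$ is a valid upper bound (since $c$ is $1$-Lipschitz), but it is \emph{too loose} to deliver the stated rate: after decoupling, you must control the sequential Rademacher complexity of the \emph{unclipped} class $\{p_\cdot(\lambda):\lambda\ge\lambda_{\min}\}$, and this quantity can be of order $\sqrt{T/\lambda_{\min}}$, not $\sqrt{T\log(T/\lambda_{\min})}$. Concretely, in dimension one take $\x_i(\epsilon)=\sqrt{\lambda_{\min}/T}$ for $i<T$, $\x_T(\epsilon)=1$, and $\y_i(\epsilon)=1$; then $p_T(\lambda_{\min})\asymp\sqrt{T/\lambda_{\min}}$, and choosing $\lambda=\lambda_{\min}$ when $\epsilon_T=+1$ and $\lambda\to\infty$ when $\epsilon_T=-1$ already forces $\Ex_\epsilon\sup_\lambda\sum_t\epsilon_t p_t(\lambda)\gtrsim\sqrt{T/\lambda_{\min}}$. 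You then try to recover by applying Dudley to the \emph{clipped} family $\{c(p_\cdot(\lambda))\}$, but at that point you are no longer bounding the object produced by the contraction; the two moves are logically incompatible.

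The paper sidesteps this by never removing the clip at the level of the Rademacher sum. It applies the Dudley bound \eqref{eq:dudley_wc_history} directly to the $[-1,1]$-valued clipped class $\Pi$, and uses $|c(a)-c(b)|\le|a-b|$ only to dominate $\cN_2(\Pi,\cdot)$ by the covering number of the unclipped class $\Pi'$. The decoupling then happens at the level of covering numbers, $\cN_2(\Pi',\delta)\le\cN_2(\Pi_{w_0},\delta/2)\cdot\cN_2(\Pi_\lambda,\delta/2)$, which is harmless. For $\Pi_\lambda$ the paper uses exactly the resolvent identity you mention, but parametrized in $\lambda^{-1}$: from $(X^\tr X+\lambda_1 I)^{-1}-(X^\tr X+\lambda_2 I)^{-1}=(\lambda_2-\lambda_1)(X^\tr X+\lambda_1 I)^{-1}(X^\tr X+\lambda_2 I)^{-1}$ together with $\|(X^\tr X+\lambda I)^{-1}\|_{\mathrm{op}}\le\lambda^{-1}$ one gets $\|w_{t-1}^{\lambda_1,0}-w_{t-1}^{\lambda_2,0}\|_2\le|\lambda_1^{-1}-\lambda_2^{-1}|\,t$. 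A uniform grid in $\lambda^{-1}\in(0,\lambda_{\min}^{-1}]$ of mesh $\delta/T$ thus furnishes an $\ell_\infty$ cover of size $T\lambda_{\min}^{-1}\delta^{-1}$, and the Dudley integral gives $O(\sqrt{T\log(T\lambda_{\min}^{-1})})$ directly---no spectral decomposition and no reliance on clipping to tame the derivative in $\lambda$.
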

Observe that $\lambda_{\min}^{-1}$ enters only logarithmically, which allows us to set, for instance, $\lambda_{\min}=1/T$. 
Finally, we mention that the set of strategies includes $\lambda=\infty$. This setting corresponds to a static strategy $\pi_t^{\lambda,w_0}(x_{1:t},y_{1:t-1}) = \inner{w_0, x_t}$ and regret against such a static family parametrized by $w_0\in B_2(1)$ is exactly the  objective of online linear regression \citep{Vovk98}. Lemma~\ref{lem:rls} thus shows that it is possible to have vanishing regret with respect to a much larger set of strategies. It is an interesting open question of whether one can develop an efficient algorithm with the above regret guarantee.

\section{Competing with Follow the Regularized Leader Strategies}
\label{sec:ftrl}

Consider the problem of online linear optimization with the loss function $\ell(f_t,x_t)  = \ip{f_t}{z_t}$ for $f_t\in\F$, $z_t\in\Z$. For simplicity, assume that $\F=\Z=B_2(1)$. An algorithm commonly used for online linear and online convex optimization problems is the Follow the Regularized Leader (FTRL) algorithm. We now consider competing with a family of FTRL algorithms $\pi^{w_0,\lambda}$ indexed by $w_0 \in \{ w : \norm{w}\le 1\}$ and $\lambda \in \Lambda$ where $\Lambda$ is a family of functions $\lambda : \reals^+ \times [T] \mapsto \reals^+$ specifying a schedule for the choice of regularization parameters. Specifically we consider strategies $\pi^{w_0,\lambda}$ such that $\pi^{w_0,\lambda}_t(z_1,\ldots,z_{t-1}) = w_t$ where 
\begin{align}\label{eq:ftrlstrat}
\textstyle w_t =  w_0 + \argmin{w : \norm{w} \le 1}\left\{ \sum_{i=1}^{t-1} \ip{w}{z_i} + \frac{1}{2}\lambda\left(\norm{\sum_{i=1}^{t-1} z_i} , t\right) \norm{w}^2 \right\}
\end{align}
This can be written in closed form as
$w_t = w_0 - (\sum_{i=1}^{t-1} z_i)/\max\left\{\lambda\left(\norm{\sum_{i=1}^{t-1} z_i} , t\right),  \norm{\sum_{i=1}^{t-1} z_i}\right\}$.

\begin{lemma}\label{lem:valftrl}
	For a given class $\Lambda$ of functions indicating choices of the regularization parameters, define a class 
	$\Gamma$ of functions on $[0,1] \times [1/T,1]$ specified by
	$$
	\Gamma = \left\{\gamma : \forall b \in [1/T,1], a \in [0,1],  \gamma(a,b) = \min\left\{ \frac{a/(b-1)}{\lambda(a/(b-1),1/b)} , 1\right\}, \lambda \in \Lambda\right\}
	$$
	Then the value of the online learning game competing against FTRL strategies given by Equation \ref{eq:ftrlstrat} is bounded as
\begin{align*}
\Val_T(\Pi_\Lambda) & \leq 4\ \sqrt{T} + 2\ \mathcal{R}_T(\Gamma)
\end{align*}
where $\mathcal{R}_T(\Gamma)$ is the sequential Rademacher complexity \citep{RakSriTew10nips} of $\Gamma$. 
\end{lemma}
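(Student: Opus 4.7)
The plan is to apply Theorem~\ref{thm:compete_with_strategies} to reduce $\Val_T(\Pi_\Lambda)$ to twice the sequential Rademacher complexity $\Rad(\ell,\Pi_\Lambda)$, then split the prediction of each FTRL strategy into a part depending only on the shift $w_0$ and a part depending only on the regularization schedule $\lambda$. The $w_0$ part will be bounded by a standard norm inequality producing the $\sqrt T$ term, and the schedule part will be reparametrized so that it reduces to the sequential Rademacher complexity of $\Gamma$.

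Concretely, write $S_{t-1}(\epsilon)\deq\sum_{i=1}^{t-1}\w_i(\epsilon)$. The closed form for $\pi^{w_0,\lambda}_t$ following Equation~\eqref{eq:ftrlstrat} gives
\[
\ip{\pi^{w_0,\lambda}_t(\w_{1:t-1}(\epsilon))}{\z_t(\epsilon)}
=\ip{w_0}{\z_t(\epsilon)}\,-\,\frac{\ip{S_{t-1}(\epsilon)}{\z_t(\epsilon)}}{\max\{\lambda(\norm{S_{t-1}(\epsilon)},t),\norm{S_{t-1}(\epsilon)}\}}.
\]
Taking sup inside and using subadditivity, Theorem~\ref{thm:compete_with_strategies} bounds $\Val_T(\Pi_\Lambda)$ by the sum of two sequential Rademacher expressions. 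The first is controlled by
\[
2\sup_{\z}\Ex_\epsilon\sup_{\norm{w_0}\le 1}\sum_{t=1}^T\epsilon_t\ip{w_0}{\z_t(\epsilon)}
=2\sup_{\z}\Ex_\epsilon\norm{\sum_{t=1}^T\epsilon_t\z_t(\epsilon)}\le 2\sqrt{T}
\]
by Jensen's inequality and $\norm{\z_t(\epsilon)}\le 1$.

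For the second expression, I would factor $\ip{S_{t-1}}{\z_t}=\norm{S_{t-1}}\cdot v_t$ with $v_t(\epsilon)\deq\ip{S_{t-1}(\epsilon)/\norm{S_{t-1}(\epsilon)}}{\z_t(\epsilon)}\in[-1,1]$ (and $v_t\deq 0$ when $S_{t-1}=0$), so that the ratio becomes
\[
v_t(\epsilon)\cdot\min\!\left\{\frac{\norm{S_{t-1}(\epsilon)}}{\lambda(\norm{S_{t-1}(\epsilon)},t)},\,1\right\}.
\]
Now set $a_t(\epsilon)\deq\norm{S_{t-1}(\epsilon)}/(t-1)\in[0,1]$ (with $a_1\deq 0$) and $b_t\deq 1/t\in[1/T,1]$; by construction of $\Gamma$, the minimum above equals $\gamma(a_t(\epsilon),b_t)$ for the $\gamma\in\Gamma$ corresponding to $\lambda$. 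Since $v_t(\epsilon)$ is determined by $\epsilon_{1:t-1}$ and $|v_t|\le 1$, the map $\gamma\mapsto v_t(\epsilon)\gamma$ is $1$-Lipschitz and vanishes at $0$, so the sequential (tree) contraction property absorbs this predictable factor and bounds the expression by $\sup_\w\Ex_\epsilon\sup_{\gamma\in\Gamma}\sum_t\epsilon_t\gamma(a_t(\epsilon),b_t)\le \mathcal{R}_T(\Gamma)$, using that $(a_t,b_t)$ is a legitimate $[0,1]\times[1/T,1]$-valued history tree constructed from $\w$ alone.

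The main technical hurdle will be the contraction step: unlike in the i.i.d.~case, I must absorb a tree-dependent, predictable Lipschitz factor $v_t(\epsilon)$ rather than a fixed scalar, and I need to verify this is legitimate given the decoupled $(\w,\z)$ structure in the definition of sequential Rademacher complexity, which ensures that $(a_t(\epsilon),b_t)$ depends on $\w$ only. The remainder is constant-tracking: collecting the factor of $2$ from Theorem~\ref{thm:compete_with_strategies} together with the slack from $\norm{w_t}\le\norm{w_0}+1\le 2$ inherent in bounding $\ip{w_t}{\z_t}$ yields the claimed $4\sqrt{T}+2\mathcal{R}_T(\Gamma)$.
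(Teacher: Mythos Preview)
Your proposal is correct and follows essentially the same route as the paper's own proof: apply Theorem~\ref{thm:compete_with_strategies}, split the closed-form FTRL prediction into the $w_0$ part and the $\lambda$-dependent part, bound the first by $2\sup_{\z}\Ex_\epsilon\norm{\sum_t\epsilon_t\z_t(\epsilon)}\le 2\sqrt{T}$, and for the second factor out the predictable $[-1,1]$-valued quantity $v_t(\epsilon)=\ip{S_{t-1}/\norm{S_{t-1}}}{\z_t}$, ``erase'' it (the paper invokes exactly the argument at the end of the proof of Theorem~\ref{thm:main_lip_contraction}), and reparametrize $(\norm{S_{t-1}},t)\mapsto(a_t,b_t)$ to land on $\mathcal{R}_T(\Gamma)$. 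One small correction: your closing remark about the extra factor from ``$\norm{w_t}\le\norm{w_0}+1\le2$'' is a red herring---no such step appears, and the $w_0$ term already gives $2\sqrt{T}\le 4\sqrt{T}$, so the stated constant is simply slack in the lemma.
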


Notice that if $|\Lambda| < \infty$ then the second term is bounded as $\mathcal{R}_T(\Gamma) \le \sqrt{T \log |\Lambda|}$. However, we may compete with an infinite set of step-size rules. Indeed, each $\gamma \in \Gamma$ is a function $[0,1]^2 \mapsto [0,1]$. Hence, even if one considers $\Gamma$ to be the set of all $1$-Lipschitz functions (Lipschitz w.r.t., say, $\ell_\infty$ norm), it holds that
$
 \mathcal{R}_T(\Gamma)  \le 2 \sqrt{T \log T} \ .
$
We conclude that it is possible to compete with set of FTRL strategies that pick any $w_0$ in unit ball as starting point and further use for regularization parameter schedule any $\lambda : \reals^2 \mapsto \reals$ that is such that $\frac{a/(b-1)}{\lambda(a/(b-1),1/b)}$ is a $1$-Lipchitz function for every $a, b \in [1/T,1]$.

Beyond the finite and Lipschitz cases shown above, it would be interesting to analyze richer families of step size schedules, and possibly derive efficient algorithms.

\bibliographystyle{plain}
\bibliography{paper}

\newpage
\appendix

\section{Proofs}

\begin{proof}[\textbf{Proof of Theorem~\ref{thm:compete_with_strategies}}]
	Let us prove a more general version of Theorem~\ref{thm:compete_with_strategies}, which we do not state in the main text due to lack of space. The extra twist is that we allow constraints on the sequences $z_1,\ldots,z_T$ played by the adversary. Specifically, the adversary at round $t$ can only play $x_t$ that satisfy constraint $C_t(z_1,\ldots,z_t) = 1$ where $(C_1,\ldots,C_T)$ is a predetermined sequence of constraints with $C_t : \Z^t \mapsto \{0,1\}$. When each $C_t$ is the function that is always $1$ then we are in the setting of the theorem statement where we play an unconstrained/worst case adversary. However the proof here allows us to even analyze constrained adversaries which come in handy in many cases. Following \citep{RakSriTew11nips}, a {\em restriction} $\PD_{1:T}$ on the adversary is a sequence $\PD_1,\ldots,\PD_T$ of mappings $\PD_t: \Z^{t-1}\mapsto 2^\PD$ such that $\PD_t(z_{1:t-1})$ is a \emph{convex} subset of $\PD$ for any $z_{1:t-1}\in\Z^{t-1}$. In the present proof we will only consider {\em constrained adversaries}, where $\PD_t = \Delta({\mathcal C}_t(z_{1:t-1}) )$ the set of all distributions on the constrained subset
	$${\mathcal C}_t(z_{1:t-1}) ~\deq~ \{z \in \Z : C_t(z_1,\ldots,z_{t-1},z) = 1 \} .$$
defined at time $t$ via a binary constraint $C_t: \Z^t \mapsto \{0,1\}$. Notice that the set ${\mathcal C}_t(z_{1:t-1})$ is the subset of $\Z$ from which the adversary is allowed to pick instance $z_t$ from given the history so far. It was shown in \cite{RakSriTew11nips} that such constraints can model sequences with certain properties, such as slowly changing sequences, low-variance sequences, and so on. Let ${\bf C}$ be the set of $\Z$-valued trees $\z$ such that for every $\epsilon\in\{\pm1\}^T$ and $t\in[T]$, $$C_t(\z_1(\epsilon),\ldots,\z_t(\epsilon)) = 1,$$
that is, the set of trees such that the constraint is satisfied along any path. The statement we now prove is that the value of the prediction problem with respect to a set $\Pi$ of strategies and against constrained adversaries (denoted by $\Val_T(\Pi, {\mathcal C}_{1:T})$) is upper bounded by twice the sequential complexity
	\begin{align}
		\label{eq:constained_rademacher_bound}
		\sup_{\w \in {\bf C}, \z}\En_{\epsilon} \sup_{\pi \in \Pi} \sum_{t=1}^T  \epsilon_t \loss(\pi_t(\w_{1}(\epsilon),\ldots,\w_{t-1}(\epsilon))), \z_t(\epsilon))
	\end{align}
	where it is crucial that the $\w$ tree ranges over trees that respect the constraints along all paths, while $\z$ is allowed to be an arbitrary $\Z$-valued tree. This fact that $\w$ respects the constraints is the only difference with the original statement of Theorem~\ref{thm:compete_with_strategies} in the main body of the paper.
	
For ease of notation we use $\multiminimax{\ }_{t=1}^T$ to denote repeated application of operators such has $\sup$ or $\inf$. For instance, $\multiminimax{\sup_{a_t \in A} \inf_{b_t \in B} \En_{r_t \sim P} }_{t=1}^T[F(a_1,b_1,r_1,...,a_T,b_T,r_T)]$ denotes 
$$\sup_{a_1 \in A} \inf_{b_1 \in B} \En_{r_1 \sim P} \ldots \sup_{a_T \in A} \inf_{b_T \in B} \En_{r_T \sim P}[F(a_1,b_1,r_1,...,a_T,b_T,r_T)]$$

	
	The value of a prediction problem with respect to a set of strategies and against constrained adversaries can be written as :
	\begin{align*}
	\Val_T(\Pi, {\mathcal C}_{1:T})&= \multiminimax{\inf_{q_t\in\QD}\sup_{p_t\in \PD_t(z_{1:t-1})}\Eunderone{f_t\sim q_t, z_t\sim p_t}}_{t=1}^T \left[ \sum_{t=1}^T \loss(f_t,z_t)- \inf_{\pi \in \Pi}  \loss(\pi_t(z_{1:t-1}),z_t)\right] \\
	&=\multiminimax{\sup_{p_t\in \PD_t(z_{1:t-1})}\Eunderone{z_t\sim p_t}}_{t=1}^T\sup_{\pi \in \Pi} \left[ \sum_{t=1}^T \inf_{f_t\in\F} \mathbb{E}_{z'_t} \loss(f_t,z'_t)- \loss(\pi_t(z_{1:t-1}),z_t)\right] \\
	&\leq \multiminimax{\sup_{p_t\in \PD_t(z_{1:t-1})}\Eunderone{z_t\sim p_t}}_{t=1}^T \sup_{\pi \in \Pi} \left[ \sum_{t=1}^T \mathbb{E}_{z'_t} \loss(\pi_t(z_{1:t-1}),z'_t)- \loss(\pi_t(z_{1:t-1}),z_t)\right]\\
	&\leq \multiminimax{\sup_{p_t\in \PD_t(z_{1:t-1})}\Eunderone{z_t,z'_t}}_{t=1}^T \sup_{\pi \in \Pi} \left[ \sum_{t=1}^T  \loss(\pi_t(z_{1:t-1}),z'_t)- \loss(\pi_t(z_{1:t-1}),z_t)\right]
	\end{align*}
	Let us now define the ``selector function'' $\chi:\mathcal{Z} \times \mathcal{Z} \times \{\pm 1\}\mapsto \mathcal{Z}$ by
	$$
	\chi(z, z', \epsilon) = \left\{ \begin{array}{ll}
	z'  & \textrm{if } \epsilon = -1\\
	z & \textrm{if } \epsilon = 1
	\end{array}
	\right.
	$$
	In other words, $\chi_t$ selects between $z_t$ and $z'_t$ depending on the sign of $\epsilon$.
	We will use the shorthand $\chi_t(\epsilon_t)\deq \chi(z_t, z'_t, \epsilon_t)$ and $\chi_{1:t}(\epsilon_{1:t}) \deq (\chi(z_1,z'_1,\epsilon_1),\ldots, \chi(z_t,z'_t,\epsilon_t))$. We can then re-write the last statement as
	\begin{align*}
	&\multiminimax{\sup_{p_t\in \PD_t(\chi_{1:t-1}(\epsilon_{1:t-1}))}\Ex_{z_t,z'_t}\Ex_{\epsilon_t}}_{t=1}^T \sup_{\pi \in \Pi} \left[ \sum_{t=1}^T  \epsilon_t \left(\loss(\pi_t(\chi_{1:t-1}(\epsilon_{1:t-1})),\chi_t(-\epsilon_t))- \loss(\pi_t(\chi_{1:t-1}(\epsilon_{1:t-1})),\chi_t(\epsilon_t))\right)\right]
	\end{align*}
	One can indeed verify that we simply used $\chi_t$ to switch between $z_t$ and $z_t'$ according to $\epsilon_t$. Now, we can replace the second argument of the loss in both terms by a larger value to obtain an upper bound
	\begin{align*}
	&\multiminimax{\sup_{p_t\in \PD_t(\chi_{1:t-1}(\epsilon_{1:t-1}))}\Ex_{z_t,z'_t}\sup_{z''_t,z'''_t}\Ex_{\epsilon_t}}_{t=1}^T \sup_{\pi \in \Pi} \left[ \sum_{t=1}^T  \epsilon_t \left(\loss(\pi_t(\chi_{1:t-1}(\epsilon_{1:t-1})),z''_t)- \loss(\pi_t(\chi_{1:t-1}(\epsilon_{1:t-1})),z'''_t)\right)\right] \\
	&\leq 2\multiminimax{\sup_{p_t\in \PD_t(\chi_{1:t-1}(\epsilon_{1:t-1}))}\Ex_{z_t,z'_t}\sup_{z''_t}\Ex_{\epsilon_t}}_{t=1}^T \sup_{\pi \in \Pi} \left[ \sum_{t=1}^T  \epsilon_t \loss(\pi_t(\chi_{1:t-1}(\epsilon_{1:t-1})),z''_t) \right]
	\end{align*}
	since the two terms obtained by splitting the suprema are the same. We now pass to the suprema over $z_t,z'_t$, noting that the constraints need to hold:
	\begin{align*}
	&2\multiminimax{\sup_{z_t,z'_t\in {\mathcal C}_t(\chi_{1:t-1}(\epsilon_{1:t-1}))}\sup_{z''_t}\Ex_{\epsilon_t}}_{t=1}^T \sup_{\pi \in \Pi} \left[ \sum_{t=1}^T  \epsilon_t \loss(\pi_t(\chi_{1:t-1}(\epsilon_{1:t-1})),z''_t) \right] \\
	&=2\sup_{(\z,\z')\in \mathcal{C'}}\sup_{\z''} \Ex_\epsilon \sup_{\pi\in\Pi} \left[ \sum_{t=1}^T
		\epsilon_t \pi_t(\chi(\z_1,\z'_1,\epsilon_1),\ldots,\chi(\z_{t-1}(\epsilon), \z'_{t-1}(\epsilon), \epsilon_{t-1}) ), \z''(\epsilon)\right] = (*)
	\end{align*}
	where in the last step we passed to the tree notation. Importantly, the pair $(\z,\z')$ of trees does not range over all pairs, but only over those which satisfy the constraints:
	$${\mathcal C}' = \Big\{(\z,\z'): \forall \epsilon\in\{\pm1\}^T, \forall t\in[T], ~~ \z_t(\epsilon),\z'_t(\epsilon)\in {\mathcal C}_t(\chi(\z_1,\z'_1,\epsilon_1), \ldots, \chi(\z_{t-1}(\epsilon), \z'_{t-1}(\epsilon), \epsilon_{t-1})) \Big\}$$

	Now, given the pair $(\z,\z')\in{\mathcal C'}$, define a $\Z$-valued tree of depth $T$ as:
	$$\tilde{\w}_1 = \emptyset,~~~~ \tilde{\w}_t(\epsilon) = \chi (\z_{t-1}(\epsilon),\z'_{t-1}(\epsilon), \epsilon_{t-1}) ~~\mbox{for all}~~ t>1$$
	Clearly, this is a well-defined tree, and we now claim that it satisfies the constraints along every path. Indeed, we need to check that for any $\epsilon$ and $t$, both $\tilde{\w}_t(\epsilon_{1:t-2},+1),\tilde{\w}_t(\epsilon_{1:t-2},-1) \in {\mathcal C}_t (\tilde{\w}_1, \ldots, \tilde{\w}_{t-1}(\epsilon_{1:t-2}))$. This amounts to checking, by definition of $\tilde{\w}$ and the selector $\chi$, that
	$$\z_{t-1}(\epsilon_{1:t-2}), \z'_{t-1}(\epsilon_{1:t-2}) \in {\mathcal C}_{t-1}(\chi(\z_1,\z'_1,\epsilon_1), \ldots,\chi(\z_{t-2}(\epsilon), \z'_{t-2}(\epsilon), \epsilon_{t-2}) ) \ .$$
	But this is true because $(\z,\z')\in{\mathcal C'}$. Hence, $\tilde{\w}$ constructed from $\z,\z'$ satisfies the constraints along every path.
	
	We can therefore upper bound the expression in $(*)$ by twice
	$$\sup_{\tilde{\w}\in{\mathbf C}}\sup_{\z''} \Ex_\epsilon \sup_{\pi\in\Pi} \left[ \sum_{t=1}^T
		\epsilon_t \ell(\pi_t(\tilde{\w}_1(\epsilon),\ldots,\tilde{\w}_{t-1}(\epsilon)), \z''(\epsilon))\right].$$
	Define $\w^{*} = \tilde{\w}(-1)$ and $\w^{**} = \tilde{\w}(+1)$, we can expend the expectation with respect to $\epsilon_1$ of the above expression by
	\begin{align*}
	&\frac{1}{2}\sup_{\w^*\in{\mathbf C}}\sup_{\z''} \Ex_{\epsilon_{2:T}} \sup_{\pi\in\Pi} \left[ - \ell(\pi_1(\cdot),\z_1''(\cdot))+\sum_{t=2}^T \epsilon_t \ell(\pi_t(\w^{*}(\epsilon)), \z''(\epsilon))\right] \\
	&+ \frac{1}{2}\sup_{\w^{**}\in{\mathbf C}}\sup_{\z''} \Ex_{\epsilon_{2:T}} \sup_{\pi\in\Pi} \left[ \ell(\pi_1(\cdot),\z_1''(\cdot))+ \sum_{t=2}^T \epsilon_t \ell(\pi_t(\w^{**}(\epsilon)), \z''(\epsilon))\right].
	\end{align*}
	With the assumption that we do not suffer lose at the first round, which means $\ell(\pi_1(\cdot),\z_1''(\cdot))=0$, we can see that both terms achieve the suprema with the same $\w^{*} = \w^{**}$. Therefore, the above expression can be rewrite as
	\begin{align*}
	\sup_{\w \in{\mathbf C}}\sup_{\z''} \Ex_{\epsilon_{2:T}} \sup_{\pi\in\Pi} \left[\sum_{t=1}^T \epsilon_t \ell(\pi_t(\w(\epsilon)), \z''(\epsilon))\right]
	\end{align*}	
	which is precisely \eqref{eq:constained_rademacher_bound}. This concludes the proof of Theorem~\ref{thm:compete_with_strategies}.

\end{proof}

\begin{proof}[\textbf{Proof of Theorem~\ref{thm:main_lip_contraction}}]
By convexity of the loss,
\begin{align*}
	&\multiminimax{\sup_{x_t\in\X}\inf_{q_t \in \Delta(\Y)} \sup_{y_t \in \Y} \Ex_{\hat{y}_t \sim q_t} }_{t=1}^T \left[ \sum_{t=1}^T \ell(\hat{y}_t,y_t)-\inf_{\pi \in \Pi} \sum_{t=1}^T \ell(\pi_t(x_{1:t}, y_{1:t-1}),y_t)\right]\\
	&\leq\multiminimax{\sup_{x_t\in\X}\inf_{q_t \in \Delta(\Y)} \sup_{y_t \in \Y} \Ex_{\hat{y}_t \sim q_t} }_{t=1}^T \sup_{\pi \in \Pi}\left[  \sum_{t=1}^T \ell'(\hat{y}_t,y_t) (\hat{y}_t-\pi_t(x_{1:t}, y_{1:t-1}))\right]\\
	&\leq\multiminimax{\sup_{x_t\in\X}\inf_{q_t \in \Delta(\Y)} \sup_{y_t \in \Y} \Ex_{\hat{y}_t \sim q_t} \sup_{s_t\in[-L,L]}}_{t=1}^T \sup_{\pi \in \Pi}\left[  \sum_{t=1}^T s_t (\hat{y}_t-\pi_t(x_{1:t}, y_{1:t-1}))\right]
\end{align*}
where in the last step we passed to an upper bound by allowing for the worst-case choice $s_t$ of the derivative. We will often omit the range of the variables in our notation, and it is understood that $s_t$'s range over $[-L,L]$, while $y_t,\hat{y}_t$ over $\Y$ and $x_t$'s over $\X$. Now, by Jensen's inequality, we pass to an upper bound by exchanging $\Ex_{\hat{y}_t}$ and $\sup_{y_t \in \Y}$:
\begin{align*}
	&\multiminimax{\sup_{x_t}\inf_{q_t \in \Delta(\Y)} \Ex_{\hat{y}_t \sim q_t} \sup_{y_t} \sup_{s_t}}_{t=1}^T \sup_{\pi \in \Pi}\left[  \sum_{t=1}^T s_t (\hat{y}_t-\pi_t(x_{1:t}, y_{1:t-1}))\right]\\
	&=\multiminimax{\sup_{x_t}\inf_{\hat{y}_t \in \Y} \sup_{y_t,s_t}}_{t=1}^T \sup_{\pi \in \Pi}\left[  \sum_{t=1}^T s_t (\hat{y}_t-\pi_t(x_{1:t}, y_{1:t-1}))\right]
\end{align*}
Consider the last step, assuming all the other variables fixed:
\begin{align*}
	&\sup_{x_T}\inf_{\hat{y}_T}\sup_{y_T,s_T} \sup_{\pi \in \Pi}\left[  \sum_{t=1}^T s_t (\hat{y}_t-\pi_t(x_{1:t}, y_{1:t-1}))\right] \\
	&=\sup_{x_T}\inf_{\hat{y}_T}\sup_{p_T \in \Delta(\Y \times [-L,L])}\Ex_{(y_T,s_T)\sim p_T} \sup_{\pi \in \Pi}\left[  \sum_{t=1}^T s_t (\hat{y}_t-\pi_t(x_{1:t}, y_{1:t-1}))\right]
\end{align*}
where the distribution $p_T$ ranges over all distributions on $\Y \times [-L,L]$. Now observe that the function inside the infimum is convex in $\hat{y}_T$, and the function inside $\sup_{p_T}$ is linear in the distribution $p_T$. Hence, we can appeal to the minimax theorem, obtaining equality of the last expression to
\begin{align*}
	&\sup_{x_T}\sup_{p_T \in \Delta(\Y \times [-L,L])}\inf_{\hat{y}_T}\Ex_{(y_T,s_T)\sim p_T} \left[  \sum_{t=1}^T s_t \hat{y}_t-\inf_{\pi \in \Pi} \sum_{t=1}^T s_t\pi_t(x_{1:t}, y_{1:t-1}))\right] \\
	&=\sum_{t=1}^{T-1} s_t \hat{y}_t + \sup_{x_T}\sup_{p_T}\inf_{\hat{y}_T}\Ex_{(y_T,s_T) \sim p_T} \left[  s_T \hat{y}_T-\inf_{\pi \in \Pi} \sum_{t=1}^T s_t\pi_t(x_{1:t}, y_{1:t-1}))\right] \\
	&=\sum_{t=1}^{T-1} s_t \hat{y}_t + \sup_{x_T}\sup_{p_T} \left[  \inf_{\hat{y}_T}\left(\Ex_{(y_T,s_T)\sim p_T} s_T\right) \hat{y}_T- \Ex_{(y_T,s_T)\sim p_T} \inf_{\pi \in \Pi} \sum_{t=1}^T s_t\pi_t(x_{1:t}, y_{1:t-1}))\right] \\
	&= \sum_{t=1}^{T-1} s_t \hat{y}_t + \sup_{x_T}\sup_{p_T} \Ex_{(y_T,s_T)\sim p_T} \left[  \inf_{\hat{y}_T}\left(\Ex_{(y_T,s_T)\sim p_T} s_T\right) \hat{y}_T- \inf_{\pi \in \Pi} \sum_{t=1}^T s_t\pi_t(x_{1:t}, y_{1:t-1}))\right] 
\end{align*}
We can now upper bound the choice of $\hat{y}_T$ by that given by $\pi_T$, yielding an upper bound
\begin{align*}
&\sum_{t=1}^{T-1} s_t \hat{y}_t + \sup_{x_T,p_T}\Ex_{(y_T,s_T)\sim p_T} \sup_{\pi \in \Pi} \left[  \inf_{\hat{y}_T}\left(\Ex_{(y_T,s_T)\sim p_T} s_T\right) \hat{y}_T- \sum_{t=1}^T s_t\pi_t(x_{1:t}, y_{1:t-1}))\right] \\
&=\sum_{t=1}^{T-1} s_t \hat{y}_t + \sup_{x_T,p_T}\Ex_{(y_T,s_T)\sim p_T} \sup_{\pi \in \Pi} \left[  \left(\Ex_{(y'_T,s'_T)\sim p_T} s'_T - s_T\right) \pi_T(x_{1:T},y_{1:T-1})- \sum_{t=1}^{T-1} s_t\pi_t(x_{1:t}, y_{1:t-1}))\right]
\end{align*}
It is not difficult to verify that this process can be repeated for $T-1$ and so on. The resulting upper bound is therefore
\begin{align*}
	\mathcal{V}^{S}_T(\Pi) &\leq \multiminimax{\sup_{x_t, p_t}\Ex_{(y_t,s_t)\sim p_t}}_{t=1}^T \sup_{\pi \in \Pi}\left[  \sum_{t=1}^T \left(\Ex_{(y'_t,s'_t)\sim p_t} s'_t - s_t\right)\pi_t(x_{1:t}, y_{1:t-1})\right] \\
	&\leq \multiminimax{\sup_{x_t, p_t}\Ex_{\underset{(y'_t,s'_t) \sim p_t}{(y_t,s_t) \sim p_t}}}_{t=1}^T \sup_{\pi \in \Pi}\left[  \sum_{t=1}^T \left(s'_t - s_t\right)\pi_t(x_{1:t}, y_{1:t-1})\right] \\
	&= \multiminimax{\sup_{x_t, p_t}\Ex_{\underset{(y'_t,s'_t) \sim p_t}{(y_t,s_t) \sim p_t}}\Ex_{\epsilon_t}}_{t=1}^T \sup_{\pi \in \Pi}\left[  \sum_{t=1}^T \epsilon_t\left(s'_t - s_t\right)\pi_t(x_{1:t}, y_{1:t-1})\right] \\
	&\leq \multiminimax{\sup_{x_t}\sup_{\underset{(y'_t,s'_t)}{(y_t,s_t)}}\Ex_{\epsilon_t}}_{t=1}^T \sup_{\pi \in \Pi}\left[  \sum_{t=1}^T \epsilon_t\left(s'_t - s_t\right)\pi_t(x_{1:t}, y_{1:t-1})\right] \\
		&\leq \multiminimax{\sup_{x_t , y_t}\sup_{s'_t,s_t}\Ex_{\epsilon_t}}_{t=1}^T \sup_{\pi \in \Pi}\left[  \sum_{t=1}^T \epsilon_t\left(s'_t - s_t\right)\pi_t(x_{1:t}, y_{1:t-1})\right] \\
	&\leq 2 \multiminimax{\sup_{x_t,y_t}\sup_{s_t\in[-L,L]}\Ex_{\epsilon_t}}_{t=1}^T \sup_{\pi \in \Pi}\left[  \sum_{t=1}^T \epsilon_t s_t\pi_t(x_{1:t}, y_{1:t-1})\right]
\end{align*}
Since the expression is convex in each $s_t$, we can replace the range of $s_t$ by $\{-L,L\}$, or, equivalently,
\begin{align}
	\label{eq:inter1app}
	\mathcal{V}^{S}_T(\Pi)  \leq 2L \multiminimax{\sup_{x_t,y_t}\sup_{s_t\in\{-1,1\}}\Ex_{\epsilon_t}}_{t=1}^T \sup_{\pi \in \Pi}\left[  \sum_{t=1}^T \epsilon_t s_t\pi_t(x_{1:t}, y_{1:t-1})\right]
\end{align}
Now consider any arbitrary function $\psi : \{\pm 1\} \mapsto \reals$, we have that
$$
\sup_{s \in \{\pm 1\}} \Es{\epsilon}{\psi(s\cdot\epsilon)} = \sup_{s \in \{\pm 1\}} \frac{1}{2}\left(\psi(+s) + \psi(-s)\right) =  \frac{1}{2} \left(\psi(+1) + \psi(-1)\right) = \Es{\epsilon}{\psi(\epsilon)}
$$
Since in Equation~\eqref{eq:inter1app}, for each $t$, $s_t$ and $\epsilon_t$ appear together as $\epsilon_t \cdot s_t$ using the above equation repeatedly, we conclude that
\begin{align*}
	\mathcal{V}^{S}_T(\Pi)  \leq 2L \multiminimax{\sup_{x_t,y_t}\Ex_{\epsilon_t}}_{t=1}^T \sup_{\pi \in \Pi}\left[  \sum_{t=1}^T \epsilon_t \pi_t(x_{1:t}, y_{1:t-1})\right] = \sup_{\x,\y} \Ex_{\epsilon} \sup_{\pi\in\Pi}\left[ \sum_{t=1}^T \epsilon_t \pi_t(\x_{1:t}(\epsilon), \y_{1:t-1}(\epsilon))\right]
\end{align*}

The lower bound is obtained by the same argument as in \cite{RakSriTew10nips}.

\end{proof}

\begin{proof}[\textbf{Proof of Theorem~\ref{theorem:admissible}}]
	Denote $L_t(\pi) = \sum_{s=1}^t \ell(\pi_s(z_{1:s-1}),z_s)$. The first step of the proof is an application of the minimax theorem (we assume the necessary conditions hold):
	\begin{align*}
		&\inf_{q_t \in \Delta(\F)} \sup_{z_t \in \Z} \left\{ \Eunderone{f_t \sim q_t}\left[\ell(f_t,z_t)\right] + \sup_{\z, \w} \Ex_{\epsilon_{t+1:T}} \sup_{\pi \in\Pi} \left[ 2\sum_{s=t+1}^T \epsilon_s\ell(\pi_s((z_{1:t},\w_{1:s-t-1}(\epsilon)),\z_{s-t}(\epsilon)) - L_t(\pi) \right]\right\} \\
		&=\sup_{p_t \in \Delta(\Z)} \inf_{f_t \in \F} \left\{ \Eunderone{z_t \sim p_t}\left[\ell(f_t,z_t)\right] + \Eunderone{z_t \sim p_t}\sup_{\z,\w} \Ex_{\epsilon_{t+1:T}} \sup_{\pi \in \Pi} \left[ 2\sum_{s=t+1}^T \epsilon_s\ell(\pi_s((z_{1:t},\w_{1:s-t-1}(\epsilon)),\z_{s-t}(\epsilon)) - L_t(\pi) \right]\right\}
	\end{align*}
	For any $p_t\in \Delta(\Z)$, the infimum over $f_t$ of the above expression is equal to
	\begin{align*}
		&\Ex_{z_t \sim p_t}\sup_{\z,\w} \Ex_{\epsilon_{t+1:T}} \sup_{\pi\in\Pi} \left[ 2\sum_{s=t+1}^T \epsilon_s\ell(\pi_s((z_{1:t},\w_{1:s-t-1}(\epsilon)),\z_{s-t}(\epsilon)) - L_{t-1}(\pi) \right.\\
		&\left.\hspace{3.5in}+ \inf_{f_t \in \F}\Eunderone{z_t \sim p_t}\left[\ell(f_t,z_t)\right] - \ell(\pi_t(z_{1:t-1}),z_t) \right] \\
		&\leq  \Ex_{z_t \sim p_t}\sup_{\z,\w} \Ex_{\epsilon_{t+1:T}} \sup_{\pi\in\Pi} \left[ 2\sum_{s=t+1}^T \epsilon_s\ell(\pi_s((z_{1:t},\w_{1:s-t-1}(\epsilon)),\z_{s-t}(\epsilon)) - L_{t-1}(\pi) \right.\\
		&\left.\hspace{3.5in}+ \Ex_{z_t \sim p_t}\left[\ell(\pi_t(z_{1:t-1}),z_t)\right] - \ell(\pi_t(z_{1:t-1}),z_t) \right]  \\
		&\leq  \Ex_{z_t,z'_t \sim p_t} \sup_{\z,\w} \Ex_{\epsilon_{t+1:T}} \sup_{\pi\in\Pi} \left[ 2\sum_{s=t+1}^T  \epsilon_s\ell(\pi_s((z_{1:t},\w_{1:s-t-1}(\epsilon)),\z_{s-t}(\epsilon))  - L_{t-1}(\pi) \right.\\
		&\left.\hspace{3.5in}+ \ell(\pi_t(z_{1:t-1}),z'_t) - \ell(\pi_t(z_{1:t-1}),z_t) \right]
	\end{align*}
	We now argue that the independent $z_t$ and $z'_t$ have the same distribution $p_t$, and thus we can introduce a random sign $\epsilon_t$. The above expression then equals to
	\begin{align*}
	&\Ex_{z_t,z'_t \sim p_t} \Ex_{\epsilon_t} \sup_{\z,\w} \Ex_{\epsilon_{t+1:T}} \sup_{\pi\in\Pi} \left[ 2\sum_{s=t+1}^T  \epsilon_s\ell(\pi_s((z_{1:t-1},\chi_t(\epsilon_t), \w_{1:s-t-1}(\epsilon)),\z_{s-t}(\epsilon))  - L_{t-1}(\pi) \right.\\
	&\left.\hspace{2in}+ \epsilon_t ( \ell(\pi_t(z_{1:t-1}),\chi_t(-\epsilon_t))) - \ell(\pi_t(z_{1:t-1}),\chi_t(\epsilon_t))) \right] \\
	&\leq \Ex_{z_t,z'_t \sim p_t} \sup_{z'',z'''} \Ex_{\epsilon_t} \sup_{\z,\w} \Ex_{\epsilon_{t+1:T}} \sup_{\pi\in\Pi}  \left[ 2\sum_{s=t+1}^T  \epsilon_s\ell(\pi_s((z_{1:t-1},\chi_t(\epsilon_t), \w_{1:s-t-1}(\epsilon)),\z_{s-t}(\epsilon))  - L_{t-1}(\pi) \right.\\
	&\left.\hspace{2in}+ \epsilon_t ( \ell(\pi_t(z_{1:t-1}),z''_t) - \ell(\pi_t(z_{1:t-1}),z'''_t)) \right] \\
	\end{align*}
	Splitting the resulting expression into two parts, we arrive at the upper bound of
	\begin{align*}
	& 2 \Ex_{z_t,z'_t \sim p_t} \sup_{z''} \Ex_{\epsilon_t} \sup_{\z,\w} \Ex_{\epsilon_{t+1:T}} \sup_{\pi\in\Pi}  \left[\sum_{s=t+1}^T  \epsilon_s\ell(\pi_s((z_{1:t-1},\chi_t(\epsilon_t), \w_{1:s-t-1}(\epsilon)),\z_{s-t}(\epsilon))  - \frac{1}{2} L_{t-1}(\pi) + \epsilon_t \ell(\pi_t(z_{1:t-1}),z''_t) \right]  \\
	&\leq \sup_{z,z',z''} \Ex_{\epsilon_t} \sup_{\z,\w} \Ex_{\epsilon_{t+1:T}} \sup_{\pi\in\Pi}  \left[\sum_{s=t+1}^T  2 \epsilon_s\ell(\pi_s((z_{1:t-1},\chi_t(\epsilon_t), \w_{1:s-t-1}(\epsilon)),\z_{s-t}(\epsilon)) - L_{t-1}(\pi) + \epsilon_t \ell(\pi_t(z_{1:t-1}),z''_t) \right] \\
	& \leq \Rad_T (\Pi | z_1,\ldots,z_{t-1}).
	\end{align*}
	The first inequality is true as  we upper bounded the expectation by the supremum. The last inequality is easy to verify, as we are effectively filling in a root $z_t$ and $z'_t$ for the two subtrees, for $\epsilon_t=+1$ and $\epsilon_t=-1$, respectively, and jointing the two trees with a $\emptyset$ root.
	
	One can see that the proof of admissibility corresponds to one step minimax swap and symmetrization in the proof of \cite{RakSriTew10nips}. In contrast, in the latter paper, all $T$ minimax swaps are performed at once, followed by $T$ symmetrization steps.
\end{proof}

\begin{proof}[\textbf{Proof of Lemma~\ref{lem:admissibility_of_lin_l1_linf}}]
        The first step of the proof is an application of the minimax theorem (we assume the necessary conditions hold):
	\begin{align*}
		\inf_{q_t \in \Delta(\F)} \sup_{z_t \in \Z} \left\{ \Ex_{f_t\sim q_t} f_t \cdot z_t  + \Ex_{\epsilon_{t+1:T}} \max_{1 \leq s \leq T} \left| \sum_{i=s}^T a^t_i(\epsilon) \right| \right\} =\sup_{p_t \in \Delta(\Z)} \inf_{f_t \in \F} \left\{ f_t \cdot \Ex_{z_t \sim p_t} z_t + \Eunderone{z_t \sim p_t} \Ex_{\epsilon_{t+1:T}} \max_{1 \leq s \leq T} \left| \sum_{i=s}^T a^t_i(\epsilon) \right| \right\}
	\end{align*}
	For any $p_t\in \Delta(\Z)$, the infimum over $f_t$ of the above expression is equal to
	\begin{align*}
		&- \left\vert \Eunderone{z_t \sim p_t} z_t \right\vert + \Eunderone{z_t \sim p_t} \Ex_{\epsilon_{t+1:T}} \max\left\{\max_{s > t} \left| \sum_{i=s}^T a^t_i(\epsilon) \right|, \max_{s \leq t} \left| \sum_{i=s}^T a^t_i(\epsilon) \right|\right\} \\
		&\leq  \Eunderone{z_t \sim p_t} \Ex_{\epsilon_{t+1:T}} \max\left\{\max_{s > t} \left| \sum_{i=s}^T a^t_i(\epsilon) \right|, \max_{s \leq t} \left| \sum_{i=s}^T a^t_i(\epsilon) + \Eunderone{z'_t \sim p_t} z'_t  \right|\right\} \\
		&\leq  \Eunderone{z_t,z'_t \sim p_t} \Ex_{\epsilon_{t+1:T}} \max\left\{\max_{s > t} \left| \sum_{i=s}^T a^t_i(\epsilon) \right|, \max_{s \leq t} \left| \sum_{i \geq s,  i \neq t} a^t_i(\epsilon) +(z'_t- z_t)  \right|\right\}
	\end{align*}
	We now argue that the independent $z_t$ and $z'_t$ have the same distribution $p_t$, and thus we can introduce a random sign $\epsilon_t$. The above expression then equals to
	\begin{align*}
		&\Eunderone{z_t,z'_t \sim p_t} \Ex_{\epsilon_{t:T}} \max\left\{\max_{s > t} \left| \sum_{i=s}^T a^t_i(\epsilon) \right|, \max_{s \leq t} \left| \sum_{i \geq s, i \neq t} a^t_i(\epsilon) +\epsilon_t (z'_t-z_t)  \right|\right\} \\
		&\leq \Eunderone{z_t \sim p_t} \Ex_{\epsilon_{t:T}} \max\left\{\max_{s > t} \left| \sum_{i=s}^T a^t_i(\epsilon) \right|, \max_{s \leq t} \left| \sum_{i \geq s, i \neq t} a^t_i(\epsilon) + 2 \epsilon_t z_t  \right|\right\}
	\end{align*}
	Now, the supremum over $p_t$ is achieved at a delta distribution, yielding an upper bound
	\begin{align*}
		& \sup_{z_t \in [-1,1]} \Ex_{\epsilon_{t:T}} \max\left\{\max_{s > t} \left| \sum_{i=s}^T a^t_i(\epsilon) \right|, \max_{s \leq t} \left| \sum_{i \geq s, i \neq t} a^t_i(\epsilon) + 2 \epsilon_t z_t  \right|\right\} \\
		&\leq \Ex_{\epsilon_{t:T}} \max\left\{\max_{s > t} \left| \sum_{i=s}^T a^t_i(\epsilon) \right|, \max_{s \leq t} \left| \sum_{i \geq s, i \neq t} a^t_i(\epsilon) + 2 \epsilon_t  \right|\right\} \\
		&=\Ex_{\epsilon_{t:T}} \max_{1 \leq s \leq T} \left| \sum_{i=s}^T a^{t-1}_i(\epsilon) \right|
	\end{align*}
\end{proof}

\begin{proof}[\textbf{Proof of Lemma~\ref{lem:alpha_beta_relaxation_admissible}}]
	Denote $$L_t(\alpha,\beta) = \sum_{s=1}^t \left| \frac{\frac{\sum_{i=1}^{s-1} z_i}{s+\alpha-2}}{1+\frac{\beta-1}{s+\alpha-2}} -z_s \right|.$$ The first step of the proof is an application of the minimax theorem:
	\begin{align*}
		&\inf_{q_t \in \Delta(\F)} \sup_{z_t \in \Z} \left\{ \Eunderone{f_t \sim q_t}\left|f_t-z_t\right| + \Ex_{\epsilon_{t+1:T}} \sup_{\alpha,\beta} \left[  2 \sum_{s=t+1}^T\epsilon_s \cdot \frac{1}{1+\frac{\beta-1}{s+\alpha-2}} - L_t(\alpha,\beta) \right]\right\} \\
		&=\sup_{p_t \in \Delta(\Z)} \inf_{f_t \in \F} \left\{ \Eunderone{z_t \sim p_t}\left|f_t-z_t\right| + \Eunderone{z_t \sim p_t} \Ex_{\epsilon_{t+1:T}} \sup_{\alpha,\beta} \left[ 2 \sum_{s=t+1}^T\epsilon_s \cdot \frac{1}{1+\frac{\beta-1}{s+\alpha-2}} - L_t(\alpha,\beta) \right]\right\}
	\end{align*}
	For any $p_t\in \Delta(\Z)$, the infimum over $f_t$ of the above expression is equal to
	\begin{align*}
		&\Eunderone{z_t \sim p_t} \En_{\epsilon_{t+1:T}} \sup_{\alpha,\beta} \left[ 2 \sum_{s=t+1}^T\epsilon_s \cdot \frac{1}{1+\frac{\beta-1}{s+\alpha-2}}  - L_{t-1}(\alpha,\beta) + \inf_{f_t \in \F}\Eunderone{z_t \sim p_t}\left| f_t - z_t\right| - \left| \frac{\frac{\sum_{i=1}^{t-1} z_i}{t+\alpha-2}}{1+\frac{\beta-1}{t+\alpha-2}} -z_t \right| \right] \\
		&\leq  \Eunderone{z_t \sim p_t} \En_{\epsilon_{t+1:T}} \sup_{\alpha,\beta} \left[ 2 \sum_{s=t+1}^T\epsilon_s \cdot \frac{1}{1+\frac{\beta-1}{s+\alpha-2}}  - L_{t-1}(\alpha,\beta) + \Eunderone{z'_t \sim p_t} \left| \frac{\frac{\sum_{i=1}^{t-1} z_i}{t+\alpha-2}}{1+\frac{\beta-1}{t+\alpha-2}} -z'_t \right| - \left| \frac{\frac{\sum_{i=1}^{t-1} z_i}{t+\alpha-2}}{1+\frac{\beta-1}{t+\alpha-2}} -z_t \right| \right]  \\
		&\leq  \Eunderone{z_t,z'_t \sim p_t} \En_{\epsilon_{t+1:T}} \sup_{\alpha,\beta} \left[ 2 \sum_{s=t+1}^T\epsilon_s \cdot \frac{1}{1+\frac{\beta-1}{s+\alpha-2}} - L_{t-1}(\alpha,\beta) + \left| \frac{\frac{\sum_{i=1}^{t-1} z_i}{t+\alpha-2}}{1+\frac{\beta-1}{t+\alpha-2}} -z'_t \right| - \left| \frac{\frac{\sum_{i=1}^{t-1} z_i}{t+\alpha-2}}{1+\frac{\beta-1}{t+\alpha-2}} -z_t \right| \right]
	\end{align*}
	We now argue that the independent $z_t$ and $z'_t$ have the same distribution $p_t$, and thus we can introduce a random sign $\epsilon_t$. The above expression then equals to
	\begin{align*}
	& \Eunderone{z_t,z'_t \sim p_t} \mathbb{E}_{\epsilon_t} \En_{\epsilon_{t+1:T}} \sup_{\alpha,\beta} \left[ 2 \sum_{s=t+1}^T\epsilon_s \cdot \frac{1}{1+\frac{\beta-1}{s+\alpha-2}} - L_{t-1}(\alpha,\beta) + \epsilon_t\left(\left| \frac{\frac{\sum_{i=1}^{t-1} z_i}{t+\alpha-2}}{1+\frac{\beta-1}{t+\alpha-2}} -z'_t \right| - \left| \frac{\frac{\sum_{i=1}^{t-1} z_i}{t+\alpha-2}}{1+\frac{\beta-1}{t+\alpha-2}} -z_t \right| \right) \right] \\
	&\leq \sup_{z_t,z'_t \in\Z} \En_{\epsilon_{t:T}} \sup_{\alpha,\beta} \left[ 2 \sum_{s=t+1}^T\epsilon_s \cdot \frac{1}{1+\frac{\beta-1}{s+\alpha-2}} - L_{t-1}(\alpha,\beta) + \epsilon_t\left(\left| \frac{\frac{\sum_{i=1}^{t-1} z_i}{t+\alpha-2}}{1+\frac{\beta-1}{t+\alpha-2}} -z'_t \right| - \left| \frac{\frac{\sum_{i=1}^{t-1} z_i}{t+\alpha-2}}{1+\frac{\beta-1}{t+\alpha-2}} -z_t \right| \right) \right]
	\end{align*}
	where we upper bounded the expectation by the supremum. Splitting the resulting expression into two parts, we arrive at the upper bound of
	\begin{align*}
	& 2 \sup_{z_t \in\Z} \En_{\epsilon_{t:T}} \sup_{\alpha,\beta} \left[\sum_{s=t+1}^T\epsilon_s \cdot \frac{1}{1+\frac{\beta-1}{s+\alpha-2}} - \frac{1}{2} L_{t-1}(\alpha,\beta) + \epsilon_t \left| \frac{\frac{\sum_{i=1}^{t-1} z_i}{t+\alpha-2}}{1+\frac{\beta-1}{t+\alpha-2}} -z_t \right| \right] \\
	&= 2 \sup_{z_t \in\Z} \En_{\epsilon_{t:T}} \sup_{\alpha,\beta} \left[\sum_{s=t+1}^T\epsilon_s \cdot \frac{1}{1+\frac{\beta-1}{s+\alpha-2}} - \frac{1}{2} L_{t-1}(\alpha,\beta) + \epsilon_t \cdot \frac{\frac{\sum_{i=1}^{t-1} z_i}{t+\alpha-2}}{1+\frac{\beta-1}{t+\alpha-2}} (1-2 z_t) - \epsilon_t z_t \right] \\
	&= 2 \En_{\epsilon_{t:T}} \sup_{\alpha,\beta} \left[\sum_{s=t+1}^T\epsilon_s \cdot \frac{1}{1+\frac{\beta-1}{s+\alpha-2}} - \frac{1}{2} L_{t-1}(\alpha,\beta) + \epsilon_t \cdot \frac{\frac{\sum_{i=1}^{t-1} z_i}{t+\alpha-2}}{1+\frac{\beta-1}{t+\alpha-2}} \right]
	\end{align*}
	where the last step is due to the fact that for any $z_t\in\{0,1\}$, $\epsilon_t(1-2z_t)$ has the same distribution as $\epsilon_t$. We then proceed to upper bound
	\begin{align*}
	&2 \sup_{p} \mathbb{E}_{a \sim p} \En_{\epsilon_{t:T}} \sup_{\alpha,\beta} \left[\sum_{s=t+1}^T\epsilon_s \cdot \frac{1}{1+\frac{\beta-1}{s+\alpha-2}} - \frac{1}{2} L_{t-1}(\alpha,\beta) + \epsilon_t \cdot \frac{a}{1+\frac{\beta-1}{t+\alpha-2}} \right] \\
	&\leq 2 \sup_{a \in \{\pm 1\}} \En_{\epsilon_{t:T}} \sup_{\alpha,\beta} \left[\sum_{s=t+1}^T\epsilon_s \cdot \frac{1}{1+\frac{\beta-1}{s+\alpha-2}} - \frac{1}{2} L_{t-1}(\alpha,\beta) + \epsilon_t \cdot \frac{a}{1+\frac{\beta-1}{t+\alpha-2}} \right] 	\\
	&\leq 2 \En_{\epsilon_{t:T}} \sup_{\alpha,\beta} \left[\sum_{s=t}^T\epsilon_s \cdot \frac{1}{1+\frac{\beta-1}{s+\alpha-2}} - \frac{1}{2} L_{t-1}(\alpha,\beta) \right]
	\end{align*}
	The initial condition is trivially satisfied as
	$${\bf Rel}(z_{1:T}) = -\inf_{\alpha,\beta} \sum_{s=1}^T \left| \frac{\frac{\sum_{i=1}^{s-1} z_i}{s+\alpha-2}}{1+\frac{\beta-1}{s+\alpha-2}} -z_s \right| $$
\end{proof}

\begin{theorem} \label{theorem:gaussian_admissible} The conditional sequential Rademacher complexity with respect to $\Pi$
$$ \mathcal{G}_T (\ell,\Pi | z_1,\ldots,z_{t}) \deq \sup_{\z,\w} \Ex_{\sigma_{t+1:T}} \sup_{\pi \in\Pi} \left[ \sqrt{2\pi} \sum_{s=t+1}^T \sigma_s \ell(\pi_s((z_{1:t},\w_{1:s-t-1}(\epsilon)),\z_{s-t}(\epsilon)) -  \sum_{s=1}^t \ell(\pi_s(z_{1:s-1}),z_s) \right]$$
is admissible.
\end{theorem}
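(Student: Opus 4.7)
The plan is to follow the template of the proof of Theorem~\ref{theorem:admissible} step by step, replacing only the Rademacher symmetrization of the current round by a Gaussian analog via a single Jensen inequality. All other manipulations transfer verbatim, so the new content is concentrated in one conversion step.

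First I would apply the minimax theorem to swap $\inf_{q_t \in \QD}$ with $\sup_{z_t \in \Z}$, upper bound the infimum over $f_t$ by substituting $f_t = \pi_t(z_{1:t-1})$ inside $\sup_\pi$, and introduce a ghost copy $z'_t \sim p_t$. Exchangeability of $z_t$ and $z'_t$ then allows me to insert an independent Rademacher $\epsilon_t$, split the induced difference into two identical halves, and pass to worst-case instances $z, z', z''$. These steps are identical to those in the proof of Theorem~\ref{theorem:admissible} and yield the Rademacher-style bound
\[
\sup_{z, z', z''} \Ex_{\epsilon_t} \sup_{\z, \w} \Ex_{\sigma_{t+1:T}} \sup_{\pi \in \Pi} \Bigl[ \sqrt{2\pi}\sum_{s = t+1}^{T} \sigma_s \ell(\pi_s(\cdot), \z_{s-t}(\epsilon)) - \sum_{s=1}^{t-1}\ell(\pi_s(z_{1:s-1}), z_s) + 2\epsilon_t \ell(\pi_t(z_{1:t-1}), z'') \Bigr].
\]

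The key new step is converting the factor $2\epsilon_t$ into $\sqrt{2\pi}\sigma_t$. Writing $\sigma_t = |\sigma_t| \cdot \sign(\sigma_t)$ with $|\sigma_t|$ independent of $\sign(\sigma_t)$, noting that $\Ex|\sigma_t| = \sqrt{2/\pi}$, and observing that for any $h(\pi)$ and any remainder $g(\pi)$ independent of $|\sigma_t|$ the map $|\sigma_t| \mapsto \sup_\pi[\sqrt{2\pi}|\sigma_t|\epsilon_t h(\pi) + g(\pi)]$ is convex (a supremum of affine functions of $|\sigma_t|$), Jensen's inequality gives
\begin{align*}
\Ex_{\sigma_t}\sup_\pi\bigl[\sqrt{2\pi}\sigma_t h(\pi) + g(\pi)\bigr]
&= \Ex_{\epsilon_t}\Ex_{|\sigma_t|}\sup_\pi\bigl[\sqrt{2\pi}|\sigma_t|\epsilon_t h(\pi) + g(\pi)\bigr] \\
&\geq \Ex_{\epsilon_t}\sup_\pi\bigl[\sqrt{2\pi}\,\Ex|\sigma_t|\,\epsilon_t h(\pi) + g(\pi)\bigr] = \Ex_{\epsilon_t}\sup_\pi\bigl[2\epsilon_t h(\pi) + g(\pi)\bigr].
\end{align*}
Applying this with $h(\pi) = \ell(\pi_t(z_{1:t-1}), z'')$ and $g(\pi)$ equal to the remainder (which depends only on $\sigma_{t+1:T}$ and the fixed trees) upper bounds the displayed Rademacher expression by a fully Gaussian-symmetrized one, now containing $\sqrt{2\pi}\sigma_t$ for step $t$ alongside the pre-existing $\sqrt{2\pi}\sigma_s$ terms for $s > t$.

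Finally I would identify the resulting quantity with $\mathcal{G}_T(\ell, \Pi \mid z_{1:t-1})$ by folding $z, z'$ into a new history tree $\bar{\w}$ via the selector $\chi_t(\sign(\sigma_t))$ at its root, and by placing $z''$ at the root of a new outcome tree $\bar{\z}$ -- exactly the tree-reconstruction step used at the end of the proof of Theorem~\ref{theorem:admissible}. The main obstacle is verifying that the Jensen-based Rademacher-to-Gaussian conversion points in the correct direction: it does, precisely because $\sup_\pi[\,\cdot\,]$ is convex in $|\sigma_t|$, so that $\Ex_{|\sigma_t|}\sup_\pi[\,\cdot\,] \geq \sup_\pi \Ex_{|\sigma_t|}[\,\cdot\,]$, which is exactly the direction needed to turn the Rademacher upper bound into a Gaussian one. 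The constant $\sqrt{2\pi}$ in the definition of $\mathcal{G}_T$ arises naturally as $2/\Ex|\sigma_t| = 2\sqrt{\pi/2}$.
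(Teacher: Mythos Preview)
Your proposal is correct and follows essentially the same route as the paper's own proof: both run the minimax swap, the ghost-sample symmetrization, and the splitting exactly as in Theorem~\ref{theorem:admissible}, and both convert the round-$t$ Rademacher sign into a Gaussian by writing $\sigma_t = |\sigma_t|\cdot\sign(\sigma_t)$ and using Jensen's inequality together with $\Ex|\sigma_t|=\sqrt{2/\pi}$. The only cosmetic difference is ordering: the paper introduces $\sigma_t$ at the symmetrization step (using $\epsilon_t=\sign(\sigma_t)$) and then upgrades via Jensen before decoupling $z''_t,z'''_t$, whereas you complete the Rademacher argument first and apply the Jensen conversion afterwards.

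One minor imprecision worth tightening: you describe the remainder $g(\pi)$ as depending ``only on $\sigma_{t+1:T}$ and the fixed trees,'' but the history argument of $\pi_s$ for $s>t$ contains $\chi_t(\epsilon_t)$, so $g$ does depend on $\epsilon_t$. This does not harm your Jensen step, since you are conditioning on $\epsilon_t=\sign(\sigma_t)$ and applying convexity only in $|\sigma_t|$; moreover, the outer $\sup_{\z,\w}$ and $\Ex_{\sigma_{t+1:T}}$ between $\Ex_{\epsilon_t}$ and $\sup_\pi$ preserve convexity in $|\sigma_t|$, so Jensen indeed applies to the whole inner block, not just to $\sup_\pi[\,\cdot\,]$. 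Stating this explicitly would make the argument airtight.
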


\begin{proof}[\textbf{Proof of Theorem~\ref{theorem:gaussian_admissible}}]
	Denote $L_t(\pi) = \sum_{s=1}^t \ell(\pi_s(z_{1:s-1}),z_s)$. Let $c = \Ex_{\sigma} |\sigma|=\sqrt{2/\pi}$. The first step of the proof is an application of the minimax theorem (we assume the necessary conditions hold):
	\begin{align*}
		&\inf_{q_t \in \Delta(\F)} \sup_{z_t \in \Z} \left\{ \Eunderone{f_t \sim q_t}\left[\ell(f_t,z_t)\right] + \sup_{\z, \w} \Ex_{\sigma_{t+1:T}} \sup_{\pi \in\Pi} \left[ \frac{2}{c} \sum_{s=t+1}^T \sigma_s\ell(\pi_s((z_{1:t},\w_{1:s-t-1}(\epsilon)),\z_{s-t}(\epsilon)) - L_t(\pi) \right]\right\} \\
		&=\sup_{p_t \in \Delta(\Z)} \inf_{f_t \in \F} \left\{ \Eunderone{z_t \sim p_t}\left[\ell(f_t,z_t)\right] + \Eunderone{z_t \sim p_t}\sup_{\z,\w} \Ex_{\sigma_{t+1:T}} \sup_{\pi \in \Pi} \left[ \frac{2}{c}\sum_{s=t+1}^T \sigma_s\ell(\pi_s((z_{1:t},\w_{1:s-t-1}(\epsilon)),\z_{s-t}(\epsilon)) - L_t(\pi) \right]\right\}
	\end{align*}
	For any $p_t\in \Delta(\Z)$, the infimum over $f_t$ of the above expression is equal to
	\begin{align*}
		&\Ex_{z_t \sim p_t}\sup_{\z,\w} \Ex_{\sigma_{t+1:T}} \sup_{\pi\in\Pi} \left[ \frac{2}{c} \sum_{s=t+1}^T \sigma_s\ell(\pi_s((z_{1:t},\w_{1:s-t-1}(\epsilon)),\z_{s-t}(\epsilon)) - L_{t-1}(\pi) \right.\\
		&\left.\hspace{3.5in}+ \inf_{f_t \in \F}\Eunderone{z_t \sim p_t}\left[\ell(f_t,z_t)\right] - \ell(\pi_t(z_{1:t-1}),z_t) \right] \\
		&\leq  \Ex_{z_t \sim p_t}\sup_{\z,\w} \Ex_{\sigma_{t+1:T}} \sup_{\pi\in\Pi} \left[ \frac{2}{c} \sum_{s=t+1}^T \sigma_s\ell(\pi_s((z_{1:t},\w_{1:s-t-1}(\epsilon)),\z_{s-t}(\epsilon)) - L_{t-1}(\pi) \right.\\
		&\left.\hspace{3.5in}+ \Ex_{z_t \sim p_t}\left[\ell(\pi_t(z_{1:t-1}),z_t)\right] - \ell(\pi_t(z_{1:t-1}),z_t) \right]  \\
		&\leq  \Ex_{z_t,z'_t \sim p_t} \sup_{\z,\w} \Ex_{\sigma_{t+1:T}} \sup_{\pi\in\Pi} \left[ \frac{2}{c} \sum_{s=t+1}^T  \sigma_s\ell(\pi_s((z_{1:t},\w_{1:s-t-1}(\epsilon)),\z_{s-t}(\epsilon))  - L_{t-1}(\pi) \right.\\
		&\left.\hspace{3.5in}+ \ell(\pi_t(z_{1:t-1}),z'_t) - \ell(\pi_t(z_{1:t-1}),z_t) \right]
	\end{align*}
	We now argue that the independent $z_t$ and $z'_t$ have the same distribution $p_t$, and thus we can introduce a gaussian random variable $\sigma_t$ and a random sign $\epsilon_t =\sign(\sigma_t)$. The above expression then equals to
	\begin{align*}
	&\Ex_{z_t,z'_t \sim p_t} \Ex_{\sigma_t} \sup_{\z,\w} \Ex_{\sigma_{t+1:T}} \sup_{\pi\in\Pi} \left[ \frac{2}{c} \sum_{s=t+1}^T  \sigma_s\ell(\pi_s((z_{1:t-1},\chi_t(\epsilon_t), \w_{1:s-t-1}(\epsilon)),\z_{s-t}(\epsilon))  - L_{t-1}(\pi) \right.\\
	&\left.\hspace{2in}+ \epsilon_t ( \ell(\pi_t(z_{1:t-1}),\chi_t(-\epsilon_t))) - \ell(\pi_t(z_{1:t-1}),\chi_t(\epsilon_t))) \right] \\
	&\leq \Ex_{z_t,z'_t \sim p_t} \Ex_{\sigma_t} \sup_{\z,\w} \Ex_{\sigma_{t+1:T}} \sup_{\pi\in\Pi} \left[ \frac{2}{c} \sum_{s=t+1}^T  \sigma_s\ell(\pi_s((z_{1:t-1},\chi_t(\epsilon_t), \w_{1:s-t-1}(\epsilon)),\z_{s-t}(\epsilon))  - L_{t-1}(\pi) \right.\\
	&\left.\hspace{2in}+ \epsilon_t \Ex_{\sigma_t}\left| \frac{\sigma_t}{c} \right| ( \ell(\pi_t(z_{1:t-1}),\chi_t(-\epsilon_t))) - \ell(\pi_t(z_{1:t-1}),\chi_t(\epsilon_t))) \right]
	\end{align*}
	Put the expectation outside and use the fact $\epsilon_t \vert \sigma_t \vert = \sigma_t$, we get
	\begin{align*}
	&\Ex_{z_t,z'_t \sim p_t} \Ex_{\sigma_t} \sup_{\z,\w} \Ex_{\sigma_{t+1:T}} \sup_{\pi\in\Pi} \left[ \frac{2}{c} \sum_{s=t+1}^T  \sigma_s\ell(\pi_s((z_{1:t-1},\chi_t(\epsilon_t), \w_{1:s-t-1}(\epsilon)),\z_{s-t}(\epsilon))  - L_{t-1}(\pi) \right.\\
	&\left.\hspace{2in}+ \frac{\sigma_t}{c} ( \ell(\pi_t(z_{1:t-1}),\chi_t(-\epsilon_t))) - \ell(\pi_t(z_{1:t-1}),\chi_t(\epsilon_t))) \right] \\
	&\leq \Ex_{z_t,z'_t \sim p_t} \sup_{z'',z'''} \Ex_{\sigma_t} \sup_{\z,\w} \Ex_{\sigma_{t+1:T}} \sup_{\pi\in\Pi}  \left[ \frac{2}{c} \sum_{s=t+1}^T  \epsilon_s\ell(\pi_s((z_{1:t-1},\chi_t(\epsilon_t), \w_{1:s-t-1}(\epsilon)),\z_{s-t}(\epsilon))  - L_{t-1}(\pi) \right.\\
	&\left.\hspace{2in}+ \frac{\sigma_t}{c} ( \ell(\pi_t(z_{1:t-1}),z''_t) - \ell(\pi_t(z_{1:t-1}),z'''_t)) \right]
	\end{align*}
	Splitting the resulting expression into two parts, we arrive at the upper bound of
	\begin{align*}
	& 2 \Ex_{z_t,z'_t \sim p_t} \sup_{z''} \Ex_{\sigma_t} \sup_{\z,\w} \Ex_{\sigma_{t+1:T}} \sup_{\pi\in\Pi}  \left[\frac{1}{c} \sum_{s=t+1}^T  \sigma_s\ell(\pi_s((z_{1:t-1},\chi_t(\epsilon_t), \w_{1:s-t-1}(\epsilon)),\z_{s-t}(\epsilon))  - \frac{1}{2} L_{t-1}(\pi) \right.\\
	&\left.\hspace{2in}+\frac{\sigma_t}{c} \ell(\pi_t(z_{1:t-1}),z''_t) \right]  \\
	&\leq \sup_{z,z',z''} \Ex_{\sigma_t} \sup_{\z,\w} \Ex_{\sigma_{t+1:T}} \sup_{\pi\in\Pi}  \left[\frac{2}{c}\sum_{s=t+1}^T  \sigma_s\ell(\pi_s((z_{1:t-1},\chi_t(\epsilon_t), \w_{1:s-t-1}(\epsilon)),\z_{s-t}(\epsilon)) - L_{t-1}(\pi) \right.\\
	&\left.\hspace{2in}+ \frac{2 \sigma_t}{c} \ell(\pi_t(z_{1:t-1}),z''_t) \right] \\
	& \leq \mathcal{G}_T (\ell, \Pi | z_1,\ldots,z_{t-1}).
	\end{align*}
\end{proof}

\begin{proof}[\textbf{Proof of Lemma~\ref{lem:treewalk}}]
Let $q_t$ be the randomized strategy where we draw $\epsilon_{t+1},\ldots,\epsilon_T$ uniformly at random and pick
\begin{align}\label{eq:fpltree}
	q_t(\epsilon) =\argmin{q \in [-1,1]} \sup_{z_t \in \{-1,1\}}\left\{ \Ex_{f_t \sim q} f_t \cdot z_t + \max_{1 \leq s \leq T} \left| \sum_{i=s}^T a^t_i(\epsilon) \right| \right\}
\end{align}
Then, 
\begin{align*}
&\sup_{z_t \in \{-1,1\}}\left\{ \Ex_{f_t \sim q_t} f_t \cdot z_t + \En_{\epsilon_{t+1:T}} \max_{1 \leq s \leq T} \left| \sum_{i=s}^T a^t_i(\epsilon) \right| \right\}\\
&= \sup_{z_t \in \{-1,1\}}\left\{\En_{\epsilon_{t+1:T}} \Ex_{f_t \sim q_t(\epsilon)} f_t \cdot z_t + \En_{\epsilon_{t+1:T}} \max_{1 \leq s \leq T} \left| \sum_{i=s}^T a^t_i(\epsilon) \right| \right\} \\
& \le \Es{\epsilon_{t+1:T}}{ \sup_{z_t}\left\{ \Ex_{f_t \sim q_t(\epsilon)} f_t \cdot z_t + \max_{1 \leq s \leq T} \left| \sum_{i=s}^T a^t_i(\epsilon) \right| \right\}}\\
& = \Es{\epsilon_{t+1:T}}{ \inf_{q_t \in \Delta(\F)} \sup_{z_t}\left\{ \Ex_{f_t \sim q_t} f_t \cdot z_t + \max_{1 \leq s \leq T} \left| \sum_{i=s}^T a^t_i(\epsilon) \right| \right\}}
\end{align*}
where the last step is due to the way we pick our predictor $f_t(\epsilon)$ given random draw of $\epsilon$'s in Equation \eqref{eq:fpltree}. We now apply the minimax theorem, yielding the following upper bound on the term above:
\begin{align*}
& \Es{\epsilon_{t+1:T}}{ \sup_{p_t \in \Delta(\Z)} \inf_{f_t}\left\{ \Ex_{z_t \sim p_t} f_t \cdot z_t + \Ex_{z_t \sim p_t} \max_{1 \leq s \leq T} \left| \sum_{i=s}^T a^t_i(\epsilon) \right| \right\}}
\end{align*}
This expression can be re-written as
\begin{align*}
& \Es{\epsilon_{t+1:T}}{ \sup_{p_t \in \Delta(\Z)} \Ex_{z_t \sim p_t} \inf_{f_t}\left\{ \Ex_{z'_t \sim p_t} f_t \cdot z'_t + \max_{1 \leq s \leq T} \left| \sum_{i=s}^T a^t_i(\epsilon) \right| \right\}} \\
& \leq \Es{\epsilon_{t+1:T}}{ \sup_{p_t \in \Delta(\Z)} \Ex_{z_t \sim p_t} \left\{ - \left|\Ex_{z'_t \sim p_t} z'_t\right|  + \max_{1 \leq s \leq T} \left| \sum_{i=s}^T a^t_i(\epsilon) \right| \right\}} \\
& \leq \Es{\epsilon_{t+1:T}}{ \sup_{p_t \in \Delta(\Z)} \Ex_{z_t \sim p_t} \max \left\{ \max_{s \leq t} \left| \sum_{i=s}^T a^t_i(\epsilon) + \Ex_{z'_t \sim p_t} z'_t \right|, \max_{s > t} \left| \sum_{i=s}^T a^t_i(\epsilon) \right| \right\}} \\
& \leq \Es{\epsilon_{t+1:T}}{ \sup_{p_t \in \Delta(\Z)} \Ex_{z_t, z'_t \sim p_t} \max \left\{ \max_{s \leq t} \left| \sum_{i \geq s, i \neq t}^T a^t_i(\epsilon) + (z'_t-z_t) \right|, \max_{s > t} \left| \sum_{i=s}^T a^t_i(\epsilon) \right| \right\}}
\end{align*}

We now argue that the independent $z_t$ and $z'_t$ have the same distribution $p_t$, and thus we can introduce a random sign $\epsilon_t$. The above expression then equals to
\begin{align*}
& \Es{\epsilon_{t+1:T}}{ \sup_{p_t \in \Delta(\Z)} \Ex_{z_t, z'_t \sim p_t} \Ex_{\epsilon_t} \max \left\{ \max_{s \leq t} \left| \sum_{i \geq s, i \neq t}^T a^t_i(\epsilon) + \epsilon_t (z'_t-z_t) \right|, \max_{s > t} \left| \sum_{i=s}^T a^t_i(\epsilon) \right| \right\}} \\
& \leq \Ex_{\epsilon_{t+1:T}} \sup_{z_t \in \{-1,1\}} \Ex_{\epsilon_t} \max_{1 \leq s \leq T} \left| \sum_{i=s}^T a^{t-1}_i(\epsilon) \right| = \Ex_{\epsilon_{t:T}} \max_{1 \leq s \leq T} \left| \sum_{i=s}^T a^{t-1}_i(\epsilon) \right|
\end{align*}

\end{proof}

\begin{proof}[\textbf{Proof of Lemma~\ref{lem:rls}}]
	Given an $\X$-valued tree $\x$ and a $\Y$-valued tree $\y$, let us write $\bX_t(\epsilon)$ for the matrix consisting of $(\x_1(\epsilon),\ldots,\x_{t-1}(\epsilon))$ and $\bY_t$ for the vector $(\y_1(\epsilon),\ldots,\y_{t-1}(\epsilon))$.
	By Theorem~\ref{thm:main_lip_contraction}, the minimax regret is bounded by
	\begin{align*}
		&4\sup_{\x,\y}\mathbb{E}_{\epsilon} \sup_{\pi^{\lambda,w_0} \in \Pi}\left[ \sum_{t=1}^T \epsilon_t \pi^{\lambda, w_0}_t(\x_{1:t}(\epsilon),\y_{1:t-1}(\epsilon)) \right] \\
		&=  4\sup_{\x,\y}\mathbb{E}_{\epsilon} \sup_{\lambda,w_0}\left[ \sum_{t=1}^T \epsilon_t c\left(\left\langle (\bX_t(\epsilon)^\tr \bX_t(\epsilon)+ \lambda I)^{-1} \bX_t(\epsilon)^\tr \bY_t(\epsilon), \x_t(\epsilon) \right\rangle + \inner{w_0,\x_{t}(\epsilon)}\right) \right]  
	\end{align*}
	Since the output of the clipped strategies in $\Pi$ is between $-1$ and $1$, the Dudley integral gives an upper bound 
	\begin{align*}
		\Rad(\Pi,(\x,\y)) \le \inf_{\alpha\geq 0}\left\{4 \alpha T + 12\sqrt{T} \int_{\alpha}^{1} \sqrt{\log \ \cN_2(\Pi, (\x,\y), \delta ) \ } d \delta \right\}
	\end{align*}
	Define the set of strategies before clipping:
	$$\Pi' = \left\{\pi': \pi'_t(x_{1:t},y_{1:t-1})=\inner{w_0 + (X^\tr X+ \lambda I)^{-1} X^\tr Y, x_t}, \| w_0 \| \leq 1, ~ \lambda  > \lambda_{\min}\right\}$$
	If $V$ is a $\delta$-cover of $\Pi'$ on $(\x,\y)$, then $V$ is also an $\delta$-cover of $\Pi$ as $|c(x)-c(x')| \leq |x-y|$. Therefore, for any $(\x,\y)$,
	$$\mathcal{N}_2(\Pi, (\x,\y), \delta) \leq \mathcal{N}_2(\Pi', (\x,\y), \delta)$$
	and
	\begin{align*}
		\Rad(\Pi,(\x,\y)) \le \inf_{\alpha\geq 0}\left\{4 \alpha T + 12\sqrt{T} \int_{\alpha}^{1} \sqrt{\log \ \cN_2(\Pi', (\x,\y), \delta ) \ } d \delta \right\}.
	\end{align*}

	If $W$ is a $\delta/2$-cover of the set of strategies $\Pi_{w_0} = \left\{\inner{w_0, \x_t(\epsilon)} : \| w_0 \| \leq 1\right\} $ on a tree $\x$, and $\Lambda$ is a $\delta/2$-cover of the set of strategies 
	$$\Pi_{\lambda} = \left\{\pi: \pi_t(x_{1:t},y_{1:t-1}) = \inner{(X^\tr X+ \lambda I)^{-1} X^\tr Y, x_t} : \lambda  > \lambda_{\min}\right\}$$ 
	then $W \times \Lambda$ is an $\delta$-cover of $\Pi'$. Therefore,  
	$$\mathcal{N}_2(\Pi', (\x,\y), \delta) \leq \mathcal{N}_2(\Pi_{w_0}, (\x,\y), \delta/2) \times \mathcal{N}_2(\Pi_{\lambda}, (\x,\y), \delta/2).$$
	Hence,
	\begin{align*}
		\Rad(\Pi, (\x,\y)) &\le \inf_{\alpha\geq 0}\left\{4 \alpha T + 12\sqrt{T} \int_{\alpha}^{1} \sqrt{\log \ \cN_2(\Pi_{w_0}, (\x,\y), \delta/2 ) + \log \ \cN_2(\Pi_{\lambda}, (\x,\y), \delta/2 ) \ } d \delta \right\} \\
		& \leq \inf_{\alpha\geq 0}\left\{4 \alpha T + 12\sqrt{T} \int_{\alpha}^{1} \sqrt{\log \ \cN_2(\Pi_{w_0}, (\x,\y), \delta/2 ) \ } d \delta \right\} \\
		&+ 12\sqrt{T} \int_{0}^{1} \sqrt{ \log \ \cN_2(\Pi_{\lambda}, (\x,\y), \delta/2 ) \ } d \delta
	\end{align*}
	The first term is the Dudley integral of the set of static strategies $\Pi_{w_0}$ given by $w_0\in B_2(1)$, and it is exactly the complexity studied in \cite{RakSriTew10nips} where it is shown to be $O(\sqrt{T\log (T)})$.
	We now provide a bound on the covering number for the second term. It is easy to verify that the following identity holds
	$$(X^\tr X+\lambda_2 I_d)^{-1}-(X^\tr X+\lambda_1 I_d)^{-1} = (\lambda_1-\lambda_2) (X^\tr X+\lambda_1 I_d)^{-1} (X^\tr X+\lambda_2 I_d)^{-1}$$
	by right- and left-multiplying both sides by $(X^\tr X+\lambda_2 I_d)$ and $(X^\tr X+\lambda_1 I_d)$, respectively.
	Let $\lambda_1,\lambda_2>0$. Then, assuming that $\|x_t\|_2\leq 1$ and $y_t \in [-1,1]$ for all $t$,
	\begin{align*}
	&\norm{ (X_t X+\lambda_2 I_d)^{-1} X^\tr Y-(X^\tr X+\lambda_1 I_d)^{-1} X^\tr Y }_2 \\
	&=  |\lambda_2-\lambda_1|  \norm{ (X^\tr X+\lambda_1 I_d)^{-1} (X^\tr X+\lambda_2 I_d)^{-1} X^\tr Y }_2 \leq  |\lambda_2-\lambda_1|  \frac{1}{\lambda_1\lambda_2} \norm{ X^\tr Y }_2 \leq \left|\lambda_1^{-1} - \lambda_2^{-1}\right| t
	\end{align*}
	Hence, for $\left|\lambda_1^{-1} - \lambda_2^{-1}\right| \leq \delta/T$, we have $\norm{ (X^\tr X+\lambda_2 I_d)^{-1} X^\tr Y-(X^\tr X+\lambda_1 I_d)^{-1} X^\tr Y }_2 \leq \delta$, and thus the discretization of $\lambda^{-1}$ on $(0, \lambda_{\min}^{-1}]$ gives an $\ell_{\infty}$-cover, and the size of the cover at scale $\delta$ is $\lambda_{\min}^{-1}T\delta^{-1}$. The Dudley entropy integral yields the bound of,
	$$\Rad(\Pi, (\x,\y)) \leq 
	12\sqrt{T} \int_0^1 \sqrt{\log (2T\lambda_{\min}^{-1} \delta^{-1})} d\delta  \leq 12\sqrt{T} \left(1+\sqrt{\log (2T\lambda_{\min}^{-1})}\right).$$
	This concludes the proof.
\end{proof}

\begin{proof}[\textbf{Proof of Lemma \ref{lem:valftrl}}]
Using Theorem \ref{thm:compete_with_strategies},
{\small\begin{align*}
& \Val_T(\Pi_\Lambda) \leq 	2\Rad(\loss, \Pi_\Lambda) = 2\ \sup_{\z, \z'} \mathbb{E}_{\epsilon} \sup_{\w_0 : \norm{\w_0} \le 1, \lambda \in \Lambda}\left[ \sum_{t=1}^T \epsilon_t \ip{\w_0 - \frac{\sum_{i=1}^{t-1} \z_i(\epsilon)}{\max\left\{\lambda\left(\norm{\sum_{i=1}^{t-1} \z_i(\epsilon)} , t\right),  \norm{\sum_{i=1}^{t-1} \z_i(\epsilon)}\right\}} }{\z'_t(\epsilon)} \right]
\end{align*}
}
which we can upper bound by splitting the supremum into two:
{\small
\begin{align*}
2\ \sup_{\z'} \mathbb{E}_{\epsilon} \sup_{\w_0 : \norm{\w_0} \le 1}\left[ \sum_{t=1}^T \epsilon_t \ip{\w_0}{\z'_t(\epsilon)} \right] + 2\ \sup_{\z, \z'} \mathbb{E}_{\epsilon} \sup_{\lambda \in \Lambda}\left[ \sum_{t=1}^T \epsilon_t \ip{\frac{\sum_{i=1}^{t-1} \z_i(\epsilon)}{\max\left\{\lambda\left(\norm{\sum_{i=1}^{t-1} \z_i(\epsilon)} , t\right),  \norm{\sum_{i=1}^{t-1} \z_i(\epsilon)}\right\}} }{\z'_t(\epsilon)} \right] 
\end{align*}}
The first term is simply
\begin{align*}
	2\ \sup_{\z'} \mathbb{E}_{\epsilon}\left\| \sum_{t=1}^T \epsilon_t \z'_t(\epsilon) \right\| \leq 2\sqrt{T}.
\end{align*}
The second term can be written as
\begin{align*}
&2\ \sup_{\z, \z'} \mathbb{E}_{\epsilon} \sup_{\lambda \in \Lambda}\left[ \sum_{t=1}^T \epsilon_t \ip{\frac{\sum_{i=1}^{t-1} \z_i(\epsilon)}{\norm{\sum_{i=1}^{t-1} \z_i(\epsilon)}} }{\z'_t(\epsilon)} \frac{\norm{\sum_{i=1}^{t-1} \z_i(\epsilon)}}{\max\left\{\lambda\left(\norm{\sum_{i=1}^{t-1} \z_i(\epsilon)} , t\right),  \norm{\sum_{i=1}^{t-1} \z_i(\epsilon)}\right\}} \right] \\
& \le  2\ \sup_{\z} \sup_{\s} \mathbb{E}_{\epsilon} \sup_{\lambda \in \Lambda}\left[ \sum_{t=1}^T \epsilon_t \s_t(\epsilon) \frac{\norm{\sum_{i=1}^{t-1} \z_i(\epsilon)}}{\max\left\{\lambda\left(\norm{\sum_{i=1}^{t-1} \z_i(\epsilon)} , t\right),  \norm{\sum_{i=1}^{t-1} \z_i(\epsilon)}\right\}} \right] 
\end{align*}
and the tree $\s$ can be erased (see end of the proof of Theorem~\ref{thm:main_lip_contraction}), yielding an upper bound
\begin{align*}
&2\ \sup_{\z} \mathbb{E}_{\epsilon} \sup_{\lambda \in \Lambda}\left[ \sum_{t=1}^T  \frac{ \epsilon_t \norm{\sum_{i=1}^{t-1} \z_i(\epsilon)}}{\max\left\{\lambda\left(\norm{\sum_{i=1}^{t-1} \z_i(\epsilon)} , t\right),  \norm{\sum_{i=1}^{t-1} \z_i(\epsilon)}\right\}} \right] \\ 
&\le   2\ \sup_{\mbf{a}} \mathbb{E}_{\epsilon} \sup_{\lambda \in \Lambda}\left[ \sum_{t=1}^T  \frac{ \epsilon_t \mbf{a}_t(\epsilon)}{\max\left\{\lambda\left(\mbf{a}_t(\epsilon) , t\right), \mbf{a}_t(\epsilon) \right\}} \right] \\
& \le   2\ \sup_{\mbf{a}} \mathbb{E}_{\epsilon} \sup_{\lambda \in \Lambda}\left[ \sum_{t=1}^T  \frac{ \epsilon_t }{\max\left\{\frac{\lambda\left(\mbf{a}_t(\epsilon) , t\right)}{\mbf{a}_t(\epsilon)}, 1 \right\}} \right] \\
& =   2\ \sup_{\mbf{a}} \mathbb{E}_{\epsilon} \sup_{\lambda \in \Lambda}\left[ \sum_{t=1}^T \epsilon_t \min\left\{ \frac{\mbf{a}_t(\epsilon)}{\lambda\left(\mbf{a}_t(\epsilon) , t\right)}, 1  \right\} \right] \\
& =   2\ \sup_{\mbf{b}} \mathbb{E}_{\epsilon} \sup_{\gamma \in \Gamma}\left[ \sum_{t=1}^T   \epsilon_t \gamma\left(\mbf{b}_t(\epsilon), 1/t\right) \right] \\
& \le  2\ \mathcal{R}_T(\Gamma)
\end{align*}
where in the above $\mbf{a}$ is a $\reals^+$-valued tree such that $\mbf{a}_t : \{\pm 1\}^{t-1} \mapsto [0,t-1]$, $\mbf{b}$ is a $[1/T,1]$-value tree and $\Gamma = \left\{\gamma : \forall b \in [1/T,1], a \in [0,1],  \gamma(a,b) = \min\left\{ \frac{a/(b-1)}{\lambda(a/(b-1),1/b)} , 1\right\}, \lambda \in \Lambda\right\}$. 
\end{proof}

\end{document}